\newcommand{\method}{\textsc{R-Mixup}\xspace}
\definecolor{codegreen}{rgb}{0,0.6,0}
\definecolor{codegray}{rgb}{0.5,0.5,0.5}
\definecolor{codepurple}{rgb}{0.58,0,0.82}
\definecolor{backcolour}{rgb}{0.95,0.95,0.92}
\lstdefinestyle{mystyle}{
  backgroundcolor=\color{backcolour}, commentstyle=\color{codegreen},
  keywordstyle=\color{magenta},
  numberstyle=\tiny\color{codegray},
  stringstyle=\color{codepurple},
  basicstyle=\ttfamily\footnotesize,
  breakatwhitespace=false,         
  breaklines=true,                 
  captionpos=b,                    
  keepspaces=true,                 
  numbers=left,                    
  numbersep=5pt,                  
  showspaces=false,                
  showstringspaces=false,
  showtabs=false,                  
  tabsize=2
}
\theoremstyle{plain}
\newtheorem{theorem}{Theorem}[section]
\newtheorem{proposition}[theorem]{Proposition}
\theoremstyle{definition}
\newtheorem{definition}[theorem]{Definition}
\theoremstyle{remark}
\newtheorem{remark}[theorem]{Remark}
  \providecommand\BibTeX{{%
    \normalfont B\kern-0.5em{\scshape i\kern-0.25em b}\kern-0.8em\TeX}}}
\begin{document}

\title{\method: Riemannian Mixup for Biological Networks}


\author{Xuan Kan}
\authornote{Both authors contributed equally to this research.}
\email{xuan.kan@emory.edu}
\affiliation{
\institution{Department of Computer Science, Emory University}
\city{Atlanta}
\state{GA}
\country{USA}
}

\author{Zimu Li}
\authornotemark[1]
\email{lizm@mail.sustech.edu.cn}
\affiliation{
\institution{Pritzker School of Molecular Engineering, University of Chicago}
\city{Chicago}
\state{IL}
\country{USA}
}

\author{Hejie Cui}
\email{hejie.cui@emory.edu}
\affiliation{
\institution{Department of Computer Science, Emory University}
\city{Atlanta}
\state{GA}
\country{USA}
}

\author{Yue Yu}
\email{yueyu@gatech.edu}
\affiliation{
\institution{School of Computational Science and Engineering, Georgia Institute of Technology}
\city{Atlanta}
\state{GA}
\country{USA}
}

\author{Ran Xu}
\email{ran.xu@emory.edu}
\affiliation{
\institution{Department of Computer Science, Emory University}
\city{Atlanta}
\state{GA}
\country{USA}
}

\author{Shaojun Yu}
\email{shaojun.yu@emory.edu}
\affiliation{
\institution{Department of Computer Science, Emory University}
\city{Atlanta}
\state{GA}
\country{USA}
}

\author{Zilong Zhang}
\email{201957020@uibe.edu.cn}
\affiliation{
\institution{School of Statistics, University of International Business and Economics}
\state{Beijing}
\country{China}
}

\author{Ying Guo}
\email{yguo2@emory.edu}
\affiliation{
\institution{Department of Biostatistics and Bioinformatics, Emory University}
\city{Atlanta}
\state{GA}
\country{USA}
}

\author{Carl Yang}
\email{j.carlyang@emory.edu}
\authornote{To whom correspondence should be addressed.}
\affiliation{
\institution{Department of Computer Science, Emory University}
\city{Atlanta}
\state{GA}
\country{USA}
}


\begin{abstract}
Biological networks are commonly used in biomedical and healthcare domains to effectively model the structure of complex biological systems with interactions linking biological entities. However, due to their characteristics of high dimensionality and low sample size, directly applying deep learning models on biological networks usually faces severe overfitting. In this work, we propose \method, a Mixup-based data augmentation technique that suits the symmetric positive definite (SPD) property of adjacency matrices from biological networks with optimized training efficiency. The interpolation process in \method leverages the log-Euclidean distance metrics from the Riemannian manifold, effectively addressing the \textit{swelling effect} and \textit{arbitrarily incorrect label} issues of vanilla Mixup. We demonstrate the effectiveness of \method with five real-world biological network datasets on both regression and classification tasks. Besides, we derive a commonly ignored necessary condition for identifying the SPD matrices of biological networks and empirically study its influence on the model performance. The code implementation can be found in Appendix \ref{appendix:code}. 
\end{abstract}


\begin{CCSXML}
<ccs2012>
<concept>
<concept_id>10010147.10010257.10010321.10010337</concept_id>
<concept_desc>Computing methodologies~Regularization</concept_desc>
<concept_significance>500</concept_significance>
</concept>
<concept>
<concept_id>10010405.10010444.10010087.10010091</concept_id>
<concept_desc>Applied computing~Biological networks</concept_desc>
<concept_significance>500</concept_significance>
</concept>
</ccs2012>
\end{CCSXML}

\ccsdesc[500]{Computing methodologies~Regularization}
\ccsdesc[500]{Applied computing~Biological networks}


\keywords{biological network, data augmentation, geometric deep learning}



\maketitle

\section{Introduction}\label{sec:intro}

As a ubiquitous type of data in biomedical studies, biological networks are used to depict a complex system with a set of interactions between various biological entities. For example, in a brain network, the correlations extracted from functional Magnetic Resonance Imaging (fMRI) are modeled as interactions among human-divided brain regions~\cite{modellingfmri, simpson2013analyzing,wangfilter,lin2022deep,chen2022brainnet,kan2022fbnetgen, huang2022robust, huang2023robust}. Meanwhile, in a co-expression gene-protein network, interactions are built to discover disease genes and potential modules for clinical intervention~\cite{paci2021gene}. There are diverse ways to define the connections among entities in biological networks, such as interactions~\cite{yu2021sumgnn,li2023dsn}, reactions~\cite{craciun2006multiple}, and relations~\cite{kan2021zero,cui2022survey,xu2022counterfactual, braingb, cui2022interpretable}. One of the most widespread practices is calculating the covariance and correlation among entities to summarize and quantify interactions~\cite{BHAVSAR201892,simpson2013analyzing,yu2007importance, du2021graphgt, wang2023domain, Shiyucorrleted}.
Therefore, developing powerful computational methods to predict disease outcomes based on profiling datasets from such correlation matrices has attracted great interest from biologists~\cite{li2020braingnn,groupinn2019,Anirudh2019BootstrappingGC2019, yu2022learning,DBLP:conf/miccai/LiDZZVD19, lin2023magnetic, yang2023pretrain, ji2023prediction}. 

Deep learning methods have achieved state-of-the-art performance in various downstream applications~\cite{chu2021Twins,BrainNetCNN}, especially when the training sample size is large enough. However, biological network datasets often suffer from limited samples due to the complicated and expensive collection and annotation processes of scientific data~\cite{ABCD,localboost,xu2023weakly}. Another key property of biological networks is that the dimension of such networks is typically very high, i.e., $O(n^2)$ correlation edges among $n$ entities. Therefore, directly applying Deep Neural Networks (DNNs) to such biological network datasets can easily cause severe overfitting~\cite{arpit2017closer,xu2023neighborhood,yu2022cold, wei2022hiera, yangkdd}.

\begin{figure}
	\centering
	\includegraphics[width=1.0\linewidth]{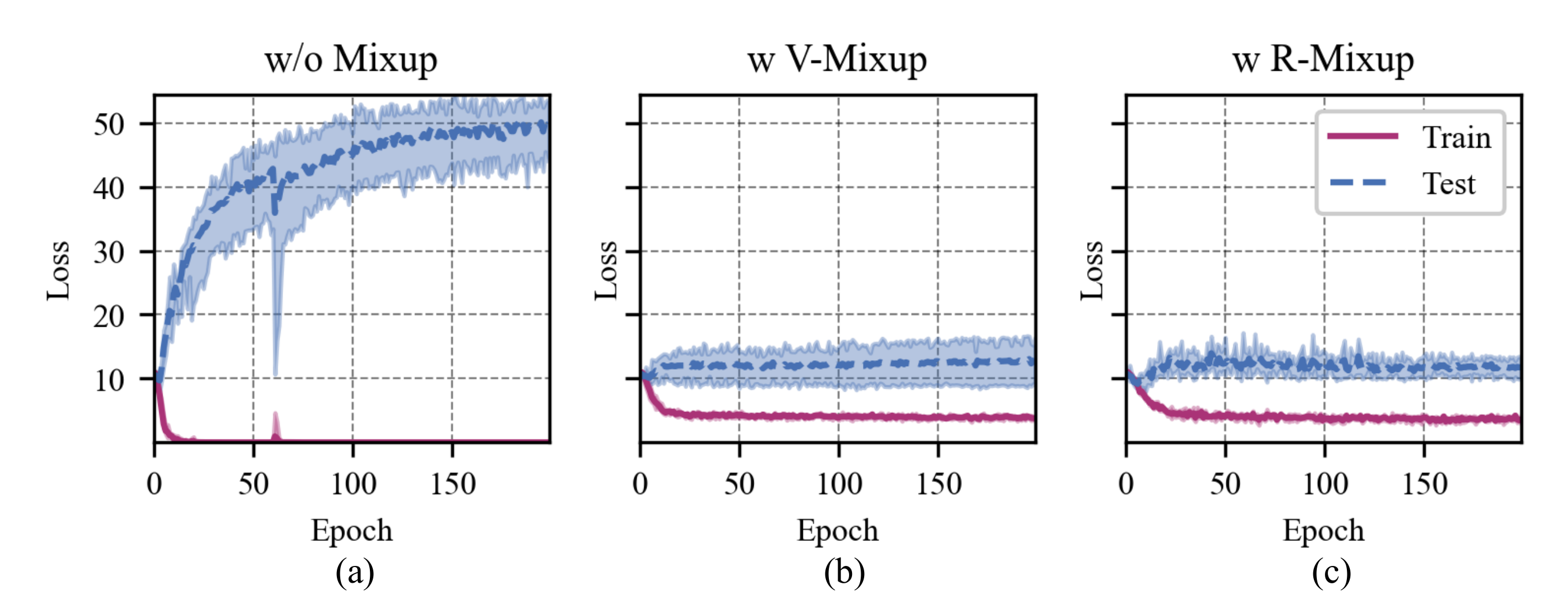}
	\caption{Train/Test performance of a Transformer on the biological network dataset PNC with  503 samples. Each sample is represented as a $120 \times 120$ adjacency matrix. V-Mixup is the vanilla Mixup and \method is our proposed method.}
 \vspace{-1.5ex}
	\label{fig:motivation1}
 \vspace{-3ex}
\end{figure}

Mixup is a widely used data augmentation technique that can improve the model performance by linearly interpolating new samples from pairs of existing instances~\cite{zhang2018mixup}. In the scenario of biological network analysis, since the node identities and their corresponding order are usually fixed across network samples within the same dataset~\cite{kan2022bnt}, the Mixup technique can be easily applied via linear interpolation. Empirically, Figure~\ref{fig:motivation1} (a) and (b) compare the processes of training a transformer model~\cite{kan2022bnt} without Mixup and with the vanilla Mixup (V-Mixup)~\cite{zhang2018mixup} technique on the brain network dataset from the PNC studies~\cite{pnc} to perform binary classification. 
In Figure~\ref{fig:motivation1} (a), the training loss without the Mixup technique diminishes quickly while the test loss continues to increase, which apparently indicates a severe overfitting problem. In contrast, in Figure~\ref{fig:motivation1} (b) with V-Mixup, the training process becomes more stable, and the model achieves higher performance with a lower test loss, even though the training loss is relatively high.

Although the vanilla Mixup can mitigate the overfitting issue for biological networks, there are two critical limitations in existing Mixup methods. The first noticeable issue is that the linear Mixup of correlation matrices in the Euclidean space would cause a \emph{swelling effect}, where the determinant of the interpolated matrix is larger than any of the original ones. The inflated determinant, which equals the product of eigenvalues, also indicates an increase in eigenvalues. This can be interpreted as exaggerated variances of the data points in the principal eigen-directions. As a result, an unphysical augmentation from original data is generated, which may change the characteristics, e.g., the correlations of different brain functional areas, of the original dataset and violate the intuition of linear interpolations that the determinant from a mixed sample should be intuitively \emph{between} the original pair of samples \cite{Chefdhotel2004,Feddern2006,Pennec2009,Dryden2010}.
On the other hand, the vanilla Mixup cannot properly handle regression tasks due to \emph{arbitrarily incorrect label}~\cite{yao2022cmix}, which means that linearly interpolating a pair of examples and their corresponding labels cannot ensure that the synthetic sample is paired with the correct label. Although several existing works like RegMix~\cite{regmix} and C-Mixup~\cite{yao2022cmix} have attempted to avoid this issue by restricting the mixing process only to samples with a similar label, their practice leads to less various sample generation and weakens the ability of Mixup towards improving the robustness and generalization of deep neural network models.

\begin{figure}
    \centering
    \includegraphics[width=0.75\linewidth]{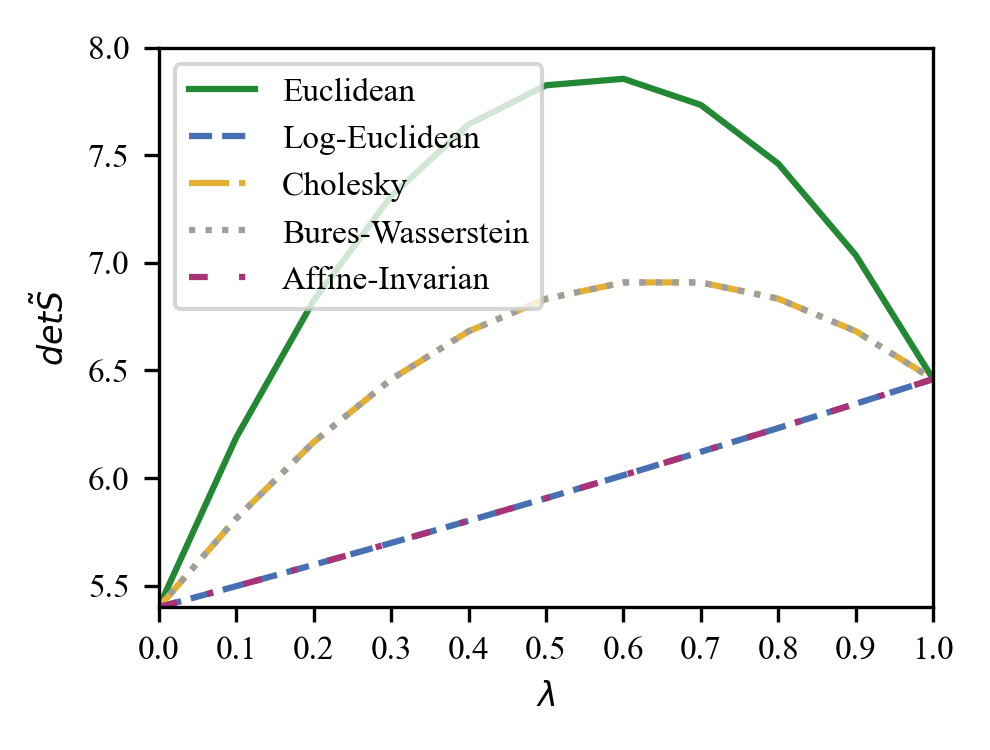}
    \caption{The \textit{swelling effect} of Mixing up with different metrics. $\tilde{S}$ is the augmented sample mixed by samples $S_i$ and $S_j$, where $\det S_i = 5.40$ and $\det S_j = 6.46$. Ideally, the determinant of the mixed sample $\tilde{S}$ should be between $\det S_i$ and $\det S_j$. The results indicate that mixing samples with Euclidean (widely used in existing Mixup methods), Cholesky, and Bures-Wasserstein metrics leads to unphysical inflations. }
    \label{fig:motivation2}
     \vspace{-2ex}
\end{figure}

Recently, investigating covariance and correlation matrices in the view of symmetric positive definite matrices (SPD) with Riemannian manifold has demonstrated impressive advantages in biological domains~\cite{barachant2010riemannian,you2022geometric, pan2022matt}, which helps to improve the model performance and capture informative sample features. Inspired by these studies, we pinpoint a promising direction to mitigate these two identified issues when adapting the Mixup technique for biological networks from the perspective of SPD analysis. However, existing works that leverage the Riemannian manifold for SPD analysis of biological networks often directly treat covariance and correlation matrices as SPD matrices without rigorous verification. We clarify that covariance and correlation matrices are not equal to SPD matrices: a necessary condition for the covariance and correlation matrices generated from a sample $X \in \mathbb{R}^{ n \times t}$ to be positive definite is that \textit{the sequence length $t$ is no less than the sample variable number $n$}. We provide theoretical proof for this condition in Appendix~\ref{sec:SPD}. The collection of positive definite matrices mathematically forms a unique geometric structure called the \textit{Riemannian manifold}, which generalizes curves and surfaces to higher dimensional objects \cite{Lee2018,Gallier2020,Sakai1996}. From a mathematical perspective, augmenting samples along geodesics on the manifold of SPDs with the log-Euclidean metric effectively (a) preserves the intrinsic geometric structure of the original data and eases the \emph{arbitrarily incorrect label} and (b) eliminates the \emph{swelling effect} as is shown in Figure \ref{fig:motivation2}. The advantages are further proved theoretically in Section \ref{sec:method}.

Based on this insight, we propose \method, a Mixup-based data augmentation approach for SPD matrices of biological networks, which augments samples based on Riemannian geodesics (i.e., Eq.\eqref{eq:geodesic}) instead of straight lines ( i.e., Eq.\eqref{eq:line}). 
We theoretically analyze the advantages of \method by incorporating tools from differential geometry, probability and information theory. Besides, a simple and efficient preprocess optimization is proposed to reduce the actual training time of \method considering the costly eigenvalue decomposition operation.
Sufficient experiments on five datasets spanning both regression and classification tasks demonstrate the superior performance and generalization ability of \method. 
For regression tasks, \method can achieve the best performance based on the same random sampling strategy as vanilla Mixup, demonstrating its ability to overcome the \emph{arbitrarily incorrect label} issue by adequately leveraging the intrinsic geometric structure of SPD. This advantage is also proved by a case study in Appendix \ref{appendix:case_study}. 
Furthermore, we observe that the performance gain of \method over existing methods is especially prominent when the annotated samples are extremely scarce, verifying its practical advantage under the low-resource settings. 

We summarize the contributions of this work as three folds:
\begin{itemize}[nosep,leftmargin=*]
\item We propose \method, a data augmentation method for SPD matrices in biological networks, which leverages the intrinsic geometric structure of the dataset and resolves the \textit{swelling effect} and \textit{arbitrarily incorrect label} issues. Different Riemannian metrics on manifold are compared, and the effectiveness of \method is theoretically proved from the perspective of statistics. We also proposed a pre-computing optimization step to reduce the burden from eigenvalue decomposition.

\item Thorough empirical studies are conducted on five real-world biological network datasets, demonstrating the superior performance of \method on both regression and classification tasks. Experiments on low-resource settings further stress its practical benefits for biological applications often with limited annotation.

\item We emphasize a commonly ignored necessary condition for viewing covariance and correlation matrices as SPD matrices. We believe the clarification of this pre-requirement for applying SPD analysis can enhance the rigor of future studies.
\end{itemize}
\section{Related Work}

\subsection{Mixup for Data Augmentation}
Mixup is a simple but effective principle to construct new training samples for image data by linear interpolating input pairs and forcing the DNNs to behave linearly in-between training examples~\cite{zhang2018mixup}. 
Many follow-up works extend Mixup from different perspectives. 
For example, \cite{verma2019manifoldmixup,venkataramanan2022alignmixup} interpolate training data in the feature space, 
\cite{guo2019mixup,chou2020remix} learn the mixing ratio for Mixup to alleviate the under-confidence issue for predictions. 
Besides, 
\cite{dabouei2021supermix,hwang2021mixrl,zhang2022m,yao2022cmix} strategically select the sample pairs for Mixup to prevent low-quality mixing examples and produce more reasonable augmented data.
To further improve the quality of the augmented data, \cite{yun2019cutmix,kim2020puzzle,hendrycks2022pixmix} create mixed examples by only interpolating a specific region (often most salient ones) of examples. {Mixup has also been extended to other data modalities such as text~\cite{chen2020mixtext,zhang2020seqmix} and audio~\cite{meng2021mixspeech}. There are several attempts to study Mixup on  non-Euclidean data, graphs, like NodeMixup~\cite{wang2021mixup}, GraphMixup~\cite{wu2021graphmixup} and G-Mixup~\cite{han2022gmixup}. However, less attention has been paid to adapting Mixup for graphs from a manifold perspective,  which is the focus of this study.} 

\subsection{Geometric Deep Learning}

Geometric deep learning aims to adapt commonly used deep network architectures from euclidean data to non-euclidean data, such as graphs and manifolds, with a broad spectrum of applications from the domains of radar data processing~\cite{brooks2019riemannian}, graph analysis~\cite{monti2017geometric,aykent2022gbpnet}, image and video processing~\cite{huang2017spdnet, ManifoldNet, monti2017geometric,he2020curvanet}, and Brain-Computer Interfaces~\cite{suh2021riemannian,pan2022matt}. 
For example, SPDNet~\cite{huang2017spdnet} builds a Riemannian neural network architecture with special convolution-like layers, rectified linear units (ReLU)-like layers, and modified backpropagation operations for the non-linear learning of SPD matrices. 
ManifoldNet~\cite{ManifoldNet} defines the analog of convolution operations for manifold-valued data. 
MoNet~\cite{monti2017geometric} generalizes CNN architectures to the non-Euclidean domain with pseudo-coordinates and weight functions. \cite{brooks2019riemannian} designs a Riemannian batch normalization for SPD matrices by leveraging geometric operations on the Riemannian manifold. MAtt~\cite{pan2022matt} proposes the manifold attention mechanism to represent spatiotemporal representations of EEG data. Though widely recognized as being effective for images, tabular and graph data, to the best of our knowledge, data augmentation methods in geometric deep learning have rarely been explored.
\section{\method} \label{sec:method}

In this section, we first provide some preliminary facts, including a necessary condition for treating covariance and correlation matrices as SPD matrices. Next, we elaborate on the detailed process of applying \method for data augmentation, compare possible mathematical metrics designs, and finally provide the theoretical analysis of the advantages of using \method. 

\subsection{Notations and Preliminary Results}
Given $n$ variables of biological entities, we extract a $t$ length sequence for each variable and compose the input sequences $X \in \mathbb{R}^{ n \times t}$. The correlation matrix or biological network $S = \text{Cor}(X) \in \mathbb{R}^{n \times n}$ is obtained by taking the pairwise correlation among each pair of the biological variables. The value $y$ is the network-level prediction label for the prediction task.

\begin{definition}\label{def:SPD}
	A symmetric $n \times n$ matrix $S$ is \emph{positive semi-definite} if for any vector $u \in \mathbb{R}^n$, $u^T S u \geq 0$. Equivalently, this means that the eigenvalues of $S$ are all nonnegative. If the inequality holds strictly, $S$ is said to be \emph{positive definite}, or \emph{symmetric positive definite}, or SPD for short.
\end{definition}

Let $\text{Sym}(n)$ be the collection of all positive semi-definite matrices, and $\text{Sym}^+(n)$ denotes the collection of all SPDs. The collection $\text{Sym}(n)$ can be seen as an $\frac{1}{2}n(n-1)$-dimensional Euclidean space, but $\text{Sym}^+(n) \subset \mathbb{R}^{n \times n}$ admits a more general structure call \emph{manifold} in differential geometry which resembles the Euclidean space in its local regions. To set up the modeling on the manifold $\text{Sym}^+(n)$, the covariance matrix $\text{Cor}(X)$ for the input $X$ should be positive definite. However, it is worth mentioning that previous studies that use the Riemannian manifold for analyzing biological networks often treat covariance and correlation matrices as SPD without proper validation. Towards this common negligence, we bring out the following basic fact:
\begin{proposition}\label{prop:MCondition}
Covariance and correlation matrices are positive semi-definite. A necessary condition for them to be positive definite is that the sample length is no less than the variable number, i.e., $t \geq n$.
\end{proposition}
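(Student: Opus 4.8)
The plan is to realize both matrices as Gram matrices, read off positive semi-definiteness from that, and then extract the rank obstruction that forces $t \ge n$ in the positive definite case.

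First I would fix notation: let $\bar X \in \mathbb{R}^{n \times t}$ be the row-centered data, $\bar X_{ik} = X_{ik} - \frac1t\sum_{\ell=1}^{t} X_{i\ell}$, so that the sample covariance is $\Sigma := \text{Cov}(X) = \frac1t\,\bar X \bar X^{T}$ (the scalar factor, and whether one divides by $t$ or $t-1$, is immaterial). Then for any $u \in \mathbb{R}^n$ we have $u^{T}\Sigma u = \frac1t\,\|\bar X^{T} u\|^2 \ge 0$, which is positive semi-definiteness. For the correlation matrix, write $R := \text{Cor}(X) = D^{-1/2}\Sigma D^{-1/2}$ with $D = \text{diag}(\Sigma)$ the (assumed strictly positive) sample variances; since $D^{-1/2}$ is invertible, $R$ is congruent to $\Sigma$, hence also positive semi-definite, and positive definite exactly when $\Sigma$ is.

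Next, for the necessary condition, I would use that $\Sigma$ is positive definite iff $u^{T}\Sigma u > 0$ for all $u \ne 0$, i.e. iff $\bar X^{T} u \ne 0$ for all $u \ne 0$, i.e. iff $\ker \bar X^{T} = \{0\}$, i.e. iff $\text{rank}(\bar X) = n$. But $\bar X \in \mathbb{R}^{n \times t}$, so $\text{rank}(\bar X) \le \min(n,t)$, and $\text{rank}(\bar X) = n$ therefore forces $t \ge n$. (One may even sharpen this: each row of $\bar X$ is orthogonal to the all-ones vector $\mathbf{1}_t$, so $\text{rank}(\bar X) \le \min(n, t-1)$ and $t \ge n+1$ is actually required; but $t \ge n$ already suffices for the stated claim.)

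The argument is elementary linear algebra, so there is no genuine ``hard part''; the only thing that needs care is the bookkeeping around the definitions — that the covariance and correlation are built from \emph{centered} data, which is precisely what ties the rank bound to the sequence length $t$; that the correlation matrix is only defined when every variable has nonzero sample variance; and that normalization by $D^{-1/2}$ is a congruence, so it preserves both the sign of the quadratic form and the rank. Once these points are made precise, the proposition follows immediately.
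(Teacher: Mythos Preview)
Your proposal is correct and is essentially the same argument as the paper's: both write the covariance as a Gram matrix of the centered data (the paper as $\tfrac{1}{t}\sum_k Y_k Y_k^T$ with $Y_k$ the centered columns, you as $\tfrac{1}{t}\bar X\bar X^T$), deduce positive semi-definiteness from $u^T\Sigma u = \tfrac{1}{t}\sum_k (Y_k^T u)^2 \ge 0$, pass to the correlation matrix by congruence with $D^{-1/2}$, and obtain the necessary condition from the spanning/rank obstruction. Your parenthetical sharpening to $t \ge n+1$ via the orthogonality of the centered rows to $\mathbf{1}_t$ is a nice extra that the paper does not mention.
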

This proposition indicates that covariance and correlation matrices only have the opportunity to be positive definite when $t \geq n$. The detailed proof can be found in Appendix \ref{sec:SPD}. This is the case for the datasets involved in this study, where most of the correlation matrices are SPD. The few exceptions would have very few zero eigenvalues, which we manually set as $10^{-6}$ to eliminate their influence. More discussions on adjusting correlation matrices to be SPDs can be found in \cite{Congedo2015, Huang2017}. 

\subsection{\method Deduction}
In this section, we explain on the detailed process of \method for SPD matrice augmentation. 
Let $S_i,S_j$ represent two different correlation matrices constructed based on $X_i, X_j$. In the vanilla Mixup~\cite{zhang2018mixup}, the augmented samples $(\tilde{S}, \tilde{y})$ are created through the straight line connecting $S_i, S_j$ and $y_i, y_j$,
\begin{align}\label{eq:line}
	\begin{split}
		\tilde{S} &= (1-\lambda) S_i + \lambda S_j, \\
		\tilde{y} &= (1 - \lambda) y_i + \lambda y_j,
	\end{split}
\end{align} 
where $\lambda \sim \operatorname{Beta}(\alpha, \alpha)$, $\operatorname{Beta}$ is the Beta distribution, given $\alpha \in(0, \infty)$.

To facilitate the illustration of \method in geometry notions, we briefly introduce the main concepts here while more detailed explanations can be found in \cite{Lee2018,Gallier2020}. To define \method, we replace Eq.\eqref{eq:line} by a certain Riemannian geodesics. \emph{Geodesics} are the generalization of \emph{straight lines} in the Euclidean space, which is intuitively the shortest path between two given points on Riemannian manifolds. Riemannian manifolds $(M,g)$ are manifolds $M$ equipped with \emph{Riemannian metrics} $g$ which measure distances between points in the manifold and induces geodesic equations \cite{Lee2018,Gallier2020}. It is generally hard to solve geodesics equations in the simple analytical form as straight lines, however, for $\text{Sym}^+(n)$, there are lots of well-defined choices of Riemannian metrics with known geodesics~\cite{Pennec2009,Jayasumana2014,Dryden2010,Gallier2020}, and we employ the \emph{log-Euclidean metric} with the following geodesic: 
\begin{align}\label{eq:geodesic}
	\tilde{S} = \exp \left((1-\lambda) \log S_i + \lambda \log S_j \right),
\end{align} 
where $\exp, \log$ are matrix exponential and logarithm.
Figure \ref{fig:visual} sketches the geodesic as the purple dotted curve and a rigorous deduction of Eq.\eqref{eq:geodesic} can be found in \cite{Gallier2020}. Implementation of the matrix exponential for positive definite matrix $S$ is straightforward: by basic linear algebra, 
\begin{align}
	S = O \text{diag}(\mu_1,...,\mu_n) O^T,
\end{align}
where $O$ is an orthogonal matrix with $\mu_i$ being eigenvalues of $S$. Then by definition,
\begin{align}\label{eq:log_exp}
	\begin{split}
		\exp S &= O \text{diag}(\exp \mu_1,...,\exp \mu_n) O^T, \\
		\log S &= O \text{diag}(\log \mu_1,...,\log \mu_n) O^T.
	\end{split}
\end{align}

\begin{figure}
	\centering
	\includegraphics[width=0.95\linewidth]{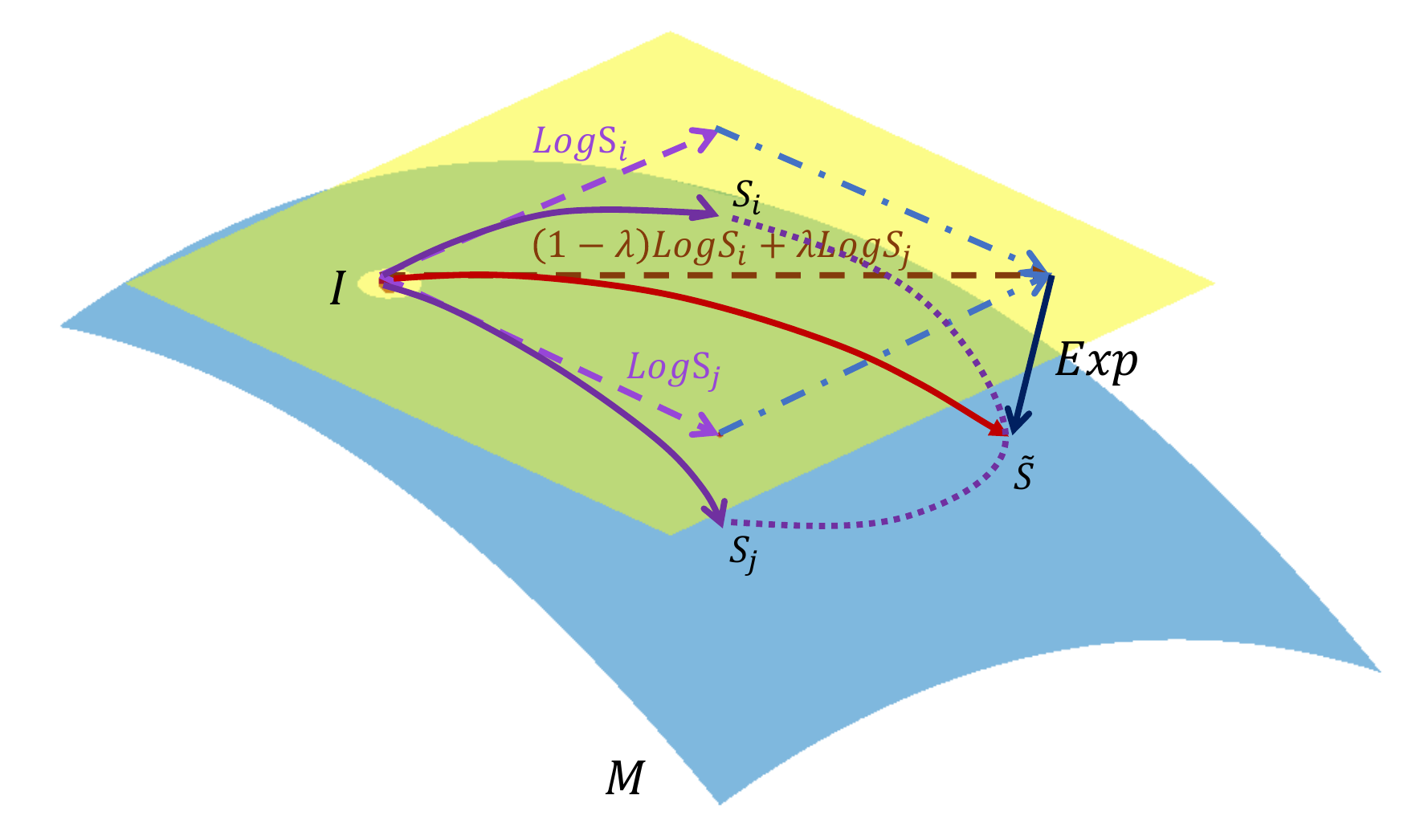}
	\caption{The process of \method generating sample $\tilde{S}$, where the blue surface $M$ represents the \textit{Riemannian manifold} and the yellow plane is the tangent plane of $M$ at the origin $I$. $S_i, S_j$ are the original samples in $M$, and $\log S_i, \log S_j$ are tangent vectors. \method creates the augmented sample $\tilde{S}$ by combining the initial tangent vectors of both trajectories connecting $I$ with $S_i, S_j$, i.e., $(1-\lambda) \log S_i + \lambda \log S_j$, and push it back to the \textit{Riemannian manifold} $M$ via exponential map.} 
	\label{fig:visual}
\end{figure}

\subsection{Comparison with Other Metrics}\label{sec:Compare}
There are various choices of Riemannian metrics and hence different geodesics on $\text{Sym}^+(n)$ \cite{Pennec2009,Dryden2010,Jayasumana2014,Bhatia2019a,Gallier2020}, such as the Cholesky metric defined by Cholesky decompositions $L_i$ of positive definite matrices $S_i$, the well-known Affine-invariant metrics on $\text{Sym}^+(n)$~\cite{Thanwerdas2023}, and the Bures-Wasserstein studied in statistics and information theory \cite{Bhatia2019a,Bhatia2019b}. We compare the most popular ones with the proposed log-Euclidean for mixing up biological networks on prediction tasks. The comparisons are summarized in Table \ref{tab:Metrics}. To be specific, different geodesics are analyzed from two perspectives: $(a)$ whether it causes the swelling effect, $(b)$ whether it is numerically stable on our dataset. 

\noindent\textbf{Swelling Effect.} The detailed definition and rigorous proof of the swelling effect can be found in Section \ref{sec:Theory} and Appendix \ref{sec:swell}. As exemplified by the motivation in Figure \ref{fig:motivation2}, Euclidean, Cholesky, and Bures-Wasserstein metrics evidently suffer from the swelling effect.

\noindent\textbf{Numerical Stability.} Augmenting matrices from the geodesic with the Affine-invariant metric requires the computation of $S_i^{-1/2}$ and hence the calculation of the inverse square root of its eigenvalues as we define matrix exponential and logarithm in Eq.\eqref{eq:log_exp}. For SPDs with small eigenvalues $\mu$, such computations may not be numerically stable since $\mu^{-1/2} \to \infty$. Furthermore, with awareness to the following limit relation:
\begin{align}
	\lim_{\mu \to 0} \frac{\log \mu}{\mu^{-1/2}} = 0,
\end{align}
which indicates that $\log \mu \ll \mu^{-1/2}$ for small $\mu$, we know that computing matrix logarithm when using log-Euclidean metric should be more stable. Similarly, for Bures-Wasserstein geodesics, to compute $(S_i S_j)^{1/2}$, we notice the following fact:
\begin{align}
	S_i S_j & = S_i^{1/2} \Big( S_i^{-1/2} (S_i S_j) S_i^{1/2} \Big) S_i^{-1/2} = S_i^{1/2} \Big( S_i^{1/2} S_j S_i^{1/2} \Big) S_i^{-1/2}. 
\end{align}
Thus,
\begin{align}
	(S_i S_j)^{1/2} = S_i^{1/2} \Big( S_i^{1/2} S_j S_i^{1/2} \Big)^{1/2} S_i^{-1/2}, 
\end{align}
where the undesirable $\mu^{-1/2}$ appears again in the calculation. 

Considering these two points, we stick with log-Euclidean metric. Experimental results in Section \ref{sec:overall} further showcase the effectiveness of this choice.

\begin{table*}[htbp]
	\centering
	\small
	\caption{Comparison of Different Metrics Choices}
	\label{tab:Metrics}
	\resizebox{0.75\linewidth}{!}{
		\begin{tabular}{cccc}
			\toprule
			\bf Metric &\bf Geodesics& \bf Swelling Effect &\bf Numerical Stability  \\
			\midrule
			Euclidean & $(1-\lambda)S_i + \lambda S_j$ & Yes & Stable \\
			Cholesky & $((1-\lambda)L_i + \lambda L_j) ((1-\lambda)L_i + \lambda L_j)^T$ & Yes & Stable \\
			Bures-Wasserstein & $ (1-\lambda)^2 S_i + \lambda^2 S_j + \lambda(1-\lambda)\big( (S_i S_j)^{1/2} + (S_j S_i)^{1/2} \big)$ & Yes & Unstable \\
			Affine-invariant & $S_i^{1/2} (S_i^{-1/2} S_j S_i^{-1/2})^\lambda S_i^{1/2}$ & No & Unstable \\
			\rowcolor{gray!10}\textbf{Log-Euclidean} & $\exp ( (1-\lambda)\log S_i + \lambda \log S_j)$ & \textbf{No} & \textbf{Stable} \\
			\bottomrule
		\end{tabular}
	}
\end{table*}

\subsection{\method Theoretical Justification}\label{sec:Theory}

Using geodesics when conducting data augmentation demonstrates unique advantages over straight lines. The first advantage is that \method will not cause the \textit{swelling effect} which exaggerates the determinant and certain eigenvectors of the samples as discussed in Section \ref{sec:intro} and \ref{sec:Compare}. Mathematically, suppose $\det S_i \leq \det S_j$, then the determinant of $\tilde{S}$ defined by Eq.\eqref{eq:geodesic} satisfies: 
\begin{align}
	\det S_i \leq \det \tilde{S} \leq \det S_j.
\end{align}
Detailed proof can be found in Appendix \ref{sec:swell}.

The second advantage is that, by leveraging the manifold structure, we can fit better estimators compared with linear interpolation in the Euclidean space. To be precise, as illustrated after Proposition \ref{prop:MCondition}, our samples are distributed over $\text{Sym}^+(n)$ rather than the whole ambient Euclidean space $\mathbb{R}^{n \times n}$, which is accepted as a \emph{prior knowledge} in the sense of Bayesian modeling fitting. Then the purpose of implementing \method becomes clear: we augment the nontrivial geometric information for the learning architectures later used in our experiments as an analogy to transforming images to enhance the translation and rotational invariance before a training of image identification \cite{Chen2020}. We theoretically justify this point from the perspective of both statistics on Riemannian manifolds \cite{Peter2007,Kim2014,Jayasumana2014,Jayasumana2015,Dodero2015} and information theory \cite{Carlen2009,Nielsen2010,Roberts2022}.

Specifically, we treat the data augmentation process as a regression conducted on the manifold $\text{Sym}^+(n)$ which is explicitly constrained by its geometric structure and based on the distribution of the dataset as the prior knowledge. Given any $\tilde{S}$, let $\tilde{m}(\tilde{S})$ denote the \emph{estimator/prediction function} of the regression whose analytical form depends on the concrete regression methods. We take geodesic regression and kernel regression \cite{Peter2007,Fletcher2011,ShaweTaylor2004} on $\text{Sym}^+(n)$ to address the problem. Roughly speaking, geodesic regression generalizes multi-linear regression on Euclidean space to manifold with the Euclidean distance being replaced by Riemannian metric. Kernel regression embeds data into higher dimensional feature space with kernel functions $K$ to grasp more non-linear relationship of the dataset. Since the exact distribution of augmented data is unknown, we follow the common practice \cite{Hardle1990,Cheng1990,ShaweTaylor2004} and apply \emph{Gauss kernel} 
\begin{align}\label{eq:EGauss}
	K_E(S_i, \tilde{S}) = \frac{1}{(2\pi \sigma^2)^{\frac{n}{2}}} \exp\left(- \frac{1}{2\sigma^2} \Vert S_i - \hat{S} \Vert^2\right),
\end{align}
which possess the \emph{universal property} to approximate any continuous bounded function in principle. However, the Gauss kernel $K_E$ is defined on the Euclidean space which \emph{unreasonably} implies non-zero density of samples outside $\text{Sym}^+(n)$ contradicting the prior knowledge. To remedy the problem, we introduce a method from the \emph{heat kernel theory} in differential geometry \cite{Berline2004,Gallier2020} to generalize $K_E$ to
\begin{align}\label{eq:HeatKernel}
	K_R(S_i, \hat{S}) = \frac{1}{(2\pi \sigma^2)^{\frac{n(n-1)}{4}}} \exp\left(- \frac{1}{2\sigma^2} d(S_i, \hat{S})^2\right),
\end{align}
with 
\begin{align}
	d(S_i,S_j) = \Vert \log S_i - \log S_j \Vert
\end{align}
being the \emph{Riemannian distances function} on $\text{Sym}^+(n)$. Then we prove in details in the Appendix \ref{sec:kernel}.

\begin{theorem}\label{Thm:comparison} 
	For $\text{Sym}^+(n)$ with log-Euclidean metric, comparing \method with estimators $\tilde{m}$ obtained by regressions with respect to the manifold structure, the square loss for augmented data $\tilde{S}$ from Riemannian geodesics Eq.\eqref{eq:geodesic} is no more than those $\tilde{S}'$ from straight lines Eq.\eqref{eq:line}:
	\begin{align}\label{eq:MSE}
		\sum (\tilde{m}(\tilde{S}) - \tilde{y})^2 \leq (\tilde{m}(\tilde{S}') - \tilde{y})^2.
	\end{align} 
\end{theorem}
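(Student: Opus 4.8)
The plan is to reduce the statement to a one-dimensional convexity argument by exploiting the fact that the log-Euclidean metric \emph{flattens} $\text{Sym}^+(n)$. First I would record the defining property of $g_{LE}$: the matrix logarithm $\phi:S\mapsto\log S$ is a global isometry from $(\text{Sym}^+(n),g_{LE})$ onto the Euclidean vector space $(\text{Sym}(n),\langle\cdot,\cdot\rangle_F)$. Under $\phi$, the geodesics of Eq.~\eqref{eq:geodesic} are exactly the straight lines $t\mapsto(1-t)\log S_i+t\log S_j$, the Riemannian distance is $d(S_i,S_j)=\Vert\log S_i-\log S_j\Vert$, and the heat kernel $K_R$ of Eq.~\eqref{eq:HeatKernel} becomes, in the coordinate $v:=\log S$, an ordinary Gauss kernel $\propto\exp(-\Vert v_i-v\Vert^2/2\sigma^2)$ — the properly normalized analogue of the ambient kernel $K_E$ of Eq.~\eqref{eq:EGauss} that, unlike $K_E$, charges no mass off $\text{Sym}^+(n)$. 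Consequently any regression carried out ``with respect to the manifold structure'' — geodesic regression or $K_R$-kernel regression — becomes, after the change of variable $v=\log S$, the corresponding Euclidean procedure (ordinary least squares, resp.\ Nadaraya--Watson with a Gaussian) on the points $v_k=\log S_k$, and the manifold estimator factors as $\tilde m=\hat m\circ\phi$ for the Euclidean estimator $\hat m$.

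Second, I would locate the two augmented points inside this flat picture. For \method, $\phi(\tilde S)=(1-\lambda)\log S_i+\lambda\log S_j=(1-\lambda)v_i+\lambda v_j$ lies \emph{exactly on the segment} $[v_i,v_j]$ at parameter $\lambda$, while the mixed label $\tilde y=(1-\lambda)y_i+\lambda y_j$ is the matching convex combination of the endpoint labels. For vanilla Mixup, $\phi(\tilde S')=\log\!\big((1-\lambda)S_i+\lambda S_j\big)$ is generically \emph{off} that segment: by operator concavity of $\log$ (Löwner), the difference $\phi(\tilde S')-\big((1-\lambda)v_i+\lambda v_j\big)$ is positive semidefinite and nonzero unless $S_i=S_j$ or $\lambda\in\{0,1\}$. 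This off-segment deviation is the manifold footprint of the \emph{swelling effect} (cf.\ Appendix~\ref{sec:swell}) and the mechanism behind the \emph{arbitrarily incorrect label}.

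Third, the comparison itself. For geodesic regression $\hat m$ is affine in $v$, hence $\tilde m(\tilde S)=\hat m\big((1-\lambda)v_i+\lambda v_j\big)=(1-\lambda)\tilde m(S_i)+\lambda\tilde m(S_j)$, so $\tilde m(\tilde S)-\tilde y=(1-\lambda)\big(\tilde m(S_i)-y_i\big)+\lambda\big(\tilde m(S_j)-y_j\big)$ is a convex combination of the \emph{training residuals} at $S_i$ and $S_j$, giving $(\tilde m(\tilde S)-\tilde y)^2\leq(1-\lambda)(\tilde m(S_i)-y_i)^2+\lambda(\tilde m(S_j)-y_j)^2$ by convexity of $t\mapsto t^2$. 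In the regime where the regression honours its anchor data — a two-point or interpolating/over-parametrized fit, or the small-bandwidth limit of the $K_R$-kernel estimator, where the universal property of the Gauss kernel drives $\tilde m(S_i)\to y_i$ and $\tilde m(S_j)\to y_j$ — each summand on the left of \eqref{eq:MSE} vanishes, whereas $\tilde m(\tilde S')-\tilde y=\hat m(\phi(\tilde S'))-\tilde y$ need not, since $\phi(\tilde S')$ has left the segment on which $\hat m$ reproduces the linear label rule; thus $0=\sum(\tilde m(\tilde S)-\tilde y)^2\leq(\tilde m(\tilde S')-\tilde y)^2$. For $K_R$-kernel regression the argument is identical with ``affine in $v$'' replaced by ``affine along the fitted geodesic'', so that $\tilde m$ restricted to $[v_i,v_j]$ still interpolates linearly between $y_i$ and $y_j$.

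The hard part is the last comparison: establishing \emph{rigorously}, and not merely generically, that $(\tilde m(\tilde S')-\tilde y)^2$ dominates the geodesic-mixup loss for an arbitrary manifold estimator, since the sign of the off-segment contribution $\langle\nabla\hat m,\,\phi(\tilde S')-\phi(\tilde S)\rangle$ is a priori uncontrolled. This is precisely why the statement is phrased relative to estimators that respect the manifold prior (and, implicitly, fit the anchor samples), and it is where I would concentrate the work — either by restricting to the anchor-exact fit, where the left side is identically zero, or by bounding $\phi(\tilde S')-\phi(\tilde S)$ through the operator-concavity gap together with the swelling-effect estimate $\det S_i\leq\det\tilde S\leq\det S_j$ of Appendix~\ref{sec:swell}.
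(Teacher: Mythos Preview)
Your flattening step and the geodesic-regression half are essentially the paper's argument: the paper also uses that $\log:\text{Sym}^+(n)\to\text{Sym}(n)$ is an isometry, so that in $v$-coordinates the log-Euclidean geodesic is the straight segment, the vanilla-Mixup path is curved, and the geodesic-regression hyperplane (which, since the sample count is below the ambient dimension, interpolates the anchors exactly) contains all $\phi(\tilde S)$ but not $\phi(\tilde S')$. Your invocation of L\"owner operator concavity of $\log$ to certify that $\phi(\tilde S')$ leaves the segment is also exactly what the paper uses.

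The gap is in your kernel-regression paragraph. The claim that the $K_R$-kernel estimator is ``affine along the fitted geodesic'' is false: with two anchors the kernel ridge predictor along $[v_i,v_j]$ takes the form
\[
\tilde m(\tilde S(\lambda))=\frac{1}{1-K_{ij}^2}\Big(K_{ij}^{\lambda}(y_i-K_{ij}y_j)+K_{ij}^{1-\lambda}(y_j-K_{ij}y_i)\Big),
\]
which interpolates the endpoints but is \emph{not} linear in $\lambda$. Your fallback to the small-bandwidth/anchor-exact limit is a different (and weaker) statement than what is being proved. The paper's route for this case is concrete rather than asymptotic: it restricts to the pair $(S_i,S_j)$, writes the two-point kernel ridge predictor explicitly as above, checks its concavity in $\lambda$ by a second-derivative computation (so $\tilde m(\tilde S)\leq\tilde y$), and then uses the operator inequality $\log((1-\lambda)S_i+\lambda S_j)\succeq(1-\lambda)\log S_i+\lambda\log S_j$ \emph{quantitatively} to deduce $K_{i,\tilde S'}\leq K_{i,\tilde S}$ and $K_{j,\tilde S'}\leq K_{j,\tilde S}$, hence $0\leq\tilde m(\tilde S')\leq\tilde m(\tilde S)\leq\tilde y$ and the squared-loss ordering. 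In other words, the operator-concavity gap you mention only as a possible direction in your last paragraph is precisely the lever the paper pulls; you should replace the ``affine along the geodesic'' step with this explicit two-point computation and the kernel-value comparison.
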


A less empirical loss from regression on manifold is recognized as an evidence that \method captures some geometric features of $\text{Sym}^+(n)$, thereby providing the learning algorithm an opportunity to learn this feature. Finally, the proposed \method is formally defined as 
\begin{align}
	\begin{split}
		\tilde{S} &= \exp \left((1-\lambda) \log S_i + \lambda \log S_j\right), \\
		\tilde{y} &= (1 - \lambda) y_i + \lambda y_j,
	\end{split}
	\label{equ: r-mixup}
\end{align}
where $\lambda \sim \operatorname{Beta}(\alpha, \alpha)$, for $\alpha \in(0, \infty)$. 

\subsection{Time Complexity and Optimization}\label{sec:method_rt}

One potential concern of the proposed \method lies in its time-consuming operations of the eigenvalue decomposition and matrix multiplication (with the time complexity of $\mathcal{O}(n^3)$), which dominate the overall running time of \method. 
In practice, we find that most common modern deep learning frameworks such as PyTorch~\cite{NEURIPS2019_9015} have been optimized for accelerating matrix multiplication. Thus, the main extra time consumption of the \method is the $\exp$ and $\log$ operations of the three eigenvalue decompositions. 
We propose a sample strategy to optimize the running time of \method by precomputing the eigenvalue decomposition and saving the orthogonal matrix $O$ and eigenvalues $\{\mu_1,...,\mu_n\}$ of each sample. This precomputing process can reduce the three computations of eigenvalue decomposition to once for each sample. Formally,
\begin{align}
	\begin{split}
		\tilde{S}  = \exp \Big( &(1- \lambda)O_i \text{diag}(\log\mu_1,...,\log\mu_n) O_i^T \\
		+ &\lambda O_j \text{diag}(\log\nu_1,...,\log\nu_n) O_j^T \Big),
	\end{split}
\end{align}
where $O_i \text{diag}(\mu_1,...,\mu_n) O_i^T$ and $S_j = O_j \text{diag}(\nu_1,...,\nu_n) O_j^T$ are the eigenvalue decompositions of $S_i$ and $S_j$, respectively. The efficiency of this optimization is further discussed in Section \ref{sec:rq3}.
\begin{table*}[htbp]
\centering
\small
\caption{Dataset Summary.}
\vspace{-2ex}
\label{tab:dataset}
\resizebox{0.78\linewidth}{!}{
\begin{tabular}{ccccccc}
\toprule
\bf Dataset & \bf Sample Size&\bf Variance Number ($n$) &\bf Sequence Length ($t$) &\bf Task & \bf Class Number  \\
\midrule
ABCD-BioGender &7901 & 360 & Variable Length & Classification & 2 \\
ABCD-Cog &7749 & 360 & Variable Length  & Regression & -\\
PNC & 503 & 120 & 120 & Classification & 2  \\
ABIDE & 1009 & 100 & 100  & Classification & 2 \\
TCGA-Cancer & 240 & 50 & 50 & Classification & 24 \\
\bottomrule
\end{tabular}
}
\end{table*}

\section{Experiments} 
We evaluate the performance of \method comprehensively on real-world biological network datasets with five tasks spanning classification and regression. The dataset statistics are summarized in Table \ref{tab:dataset}. The empirical studies aim to answer the following three research questions: 
\begin{itemize}[nosep,leftmargin=*]
\item \textbf{RQ1}: How does \method perform compared with existing data augmentation strategies on biological networks with various sample sizes on different downstream tasks?
\item \textbf{RQ2}: How does the sequence length of each sample affect the characteristics of correlation matrices and consequently the choice of augmentation strategies?
\item \textbf{RQ3}: Is \method efficient in the training process and robust to hyperparameter changes?
\end{itemize}

\subsection{Experimental Setup}
\subsubsection{Datasets and Tasks}\hfill
 
\noindent\textbf{Adolescent Brain Cognitive Development Study (ABCD)}. The dataset used in this study is one of the largest publicly available fMRI datasets, with access restricted by a strict data requesting process~\cite{ABCD}. From this dataset, we define two tasks: \emph{BioGender Prediction} and \emph{Cognition Summary Score Prediction}. The data used in the experiments are fully anonymized brain networks based on the HCP 360 ROI atlas \cite{GLASSER2013105} with only biological sex labels or cognition summary scores. BioGender Prediction is a binary classification problem, which includes 7901 subjects after the quality control process, with 3961 (50.1\%) females among them. Cognition Summary Score Prediction is a regression task whose label is Cognition Total Composite Score containing seven computer-based instruments assessing five cognitive sub-domains: Language, Executive Function, Episodic Memory, Processing Speed, and Working Memory, ranging from 44.0 to 117.0.

\noindent\textbf{Autism Brain Imaging Data Exchange (ABIDE)}. The dataset includes anonymous resting-state functional magnetic resonance imaging (rs-fMRI) data from 17 international sites~\cite{abide}. It includes brain networks from 1009 subjects, with a majority of 516 (51.14\%) being patients diagnosed with Autism Spectrum Disorder (ASD). The task is to perform the binary classification for ASD diagnosis. The region definition is based on Craddock 200 atlas \cite{craddock2012whole}. Given the blood-oxygen-level-dependent (BOLD) signal length of the samples in this dataset is 100, which reflects whether neurons are active or reactive, we randomly select 100 nodes to satisfy the necessary condition discussed in Proposition~\ref{prop:MCondition} for SPD matrices. 

\noindent\textbf{Philadelphia Neuroimaging Cohort (PNC)}. The dataset is a collaborative project from the Brain Behavior Laboratory at the University of Pennsylvania and the Children's Hospital of Philadelphia. It includes a population-based sample of individuals aged 8–21 years \citep{pnc}. After the quality control, 503 subjects were included in our analysis. Among these subjects, 289 (57.46\%) are female. In the resulting data, each sample contains 264 nodes with time series data collected through 120 timesteps. Hence, we randomly select 120 nodes to satisfy the necessary condition mentioned in Proposition~\ref{prop:MCondition} for treating generated correlation matrices as SPD. BioGender Prediction is used as the downstream task.

\noindent\textbf{TCGA-Cancer Transcriptome}. 
The Cancer Genome Atlas (TCGA) dataset is a large-scale collection of multi-omics data from over 20,000 primary cancer and matched normal bio-samples spanning 33 cancer types. In this study, we select non-redundant cancer subjects with gene expression data and valid clinical information. The gene expression data is normalized, and the top 50 highly variable genes (HVG) are selected as the nodes for network construction. The subjects are then assigned to different samples based on their cancer subtype. The final dataset consists of 459 subjects from 66 cancer subtypes. We extract 240 correlation matrices from these subjects with 24 cancer types, each type includes ten samples, and each sample contains 50 nodes. The downstream task of this study is to predict cancer subtypes based on the HVG expression network.

\subsubsection{Metrics.} 
For binary classification tasks on datasets ABCD-BioGender, PNC, and ABIDE, we adopt AUROC and accuracy for a fair performance comparison. The classification threshold is set as 0.5. For the regression task on ABCD-Cog, the mean square error (MSE) is used to reflect model performance. For the multiple class classification task on TCGA-Cancer, since it contains 24 classes and each class has a balanced sample size, we take the macro Precision and macro Recall so that all classes are treated equally to reflect the overall performance. 
All the reported results are based on the average of five runs using different random seeds.

\begin{table*}[htbp]
\centering
\small
\caption{Overall performance comparison based on the Transformer backbone. The best results are in bold, and the second best results are \underline{underlined}. The $\uparrow$ indicates a higher metric value is better and $\downarrow$ indicates a lower one is better. }
\vspace{-1ex}
\label{tab:overall}
\resizebox{1.0\linewidth}{!}{
\begin{tabular}{ccccc cccc ccccc}
\toprule
\multirow{2.5}{*}{Method} &\multicolumn{2}{c}{\bf ABCD-BioGender}& & \multicolumn{1}{c}{\bf ABCD-Cog} & & \multicolumn{2}{c}{\bf PNC}& & \multicolumn{2}{c}{\bf ABIDE} & & \multicolumn{2}{c}{\bf TCGA-Cancer}\\
\cmidrule(lr){2-3} \cmidrule(lr){5-5} \cmidrule(lr){7-8} \cmidrule(lr){10-11} \cmidrule(lr){13-14}
& {AUROC$\uparrow$} & {Accuracy$\uparrow$}& { } & {MSE$\downarrow$}& { }  & {AUROC$\uparrow$}& {Accuracy$\uparrow$}& { } & {AUROC}$\uparrow$& {Accuracy$\uparrow$} & { }& {Precision$\uparrow$}& {Recall$\uparrow$}\\
\midrule
w/o Mixup &95.28±0.32 & 87.68±1.31 & { } & 60.21±1.53 & { } &  74.85±4.93 & 66.57±6.29 & { } & 73.32±4.11 & 66.00±3.66 & { } &35.33±11.52 & 45.00±10.79\\
\midrule
V-Mixup & 95.85±0.63 & 87.86±1.45 & { }& 60.43±2.67 & { }& 76.02±2.54 & 65.88±7.89  & { } & \textbf{75.03±5.04} &  {66.80±5.40}& { } & 69.58±9.39 &  \underline{77.50±6.97} \\
D-Mixup & 94.55±2.84 & 87.17±3.45 & { }& 60.96±1.82 & { }&  \underline{76.15±4.58} &  {68.82±6.29} & { } & 72.92±4.93 & \underline{67.40±5.64} & { } &\underline{70.28±12.30} & 75.83±12.98\\
DropNode &95.65±0.35 & 88.07±0.76 & { }&  65.35±2.97  & { }& 75.47±4.27 & 67.45±4.35  & { } & 73.49±4.09 & 66.00±3.16 & { } &53.96±11.34 & 61.67±10.79\\
DropEdge &95.28±0.39 & 87.54±0.60 & { }& 76.44±1.82 & { }& 72.89±5.70 & 66.27±5.31  & { } & 70.68±6.14 & 64.20±5.12& { } &67.57±5.14 & 75.00±5.10 \\
G-Mixup & 95.24±0.92 & 88.16±0.63 & { } & 62.16±2.04  & { }& 76.01±3.04 & \underline{69.41±3.21} & { } &73.68±5.67 & 65.60±4.56& { } &59.72±7.77 & 69.44±6.27 \\
C-Mixup &  \underline{96.01±0.48} &  \underline{88.40±1.44} & { } & \underline{ 59.68±1.15} & { } & 75.29±2.52 & 69.02±5.48  & { } & 74.69±4.40 & 66.40±3.36& { } &67.50±6.90 & 76.67±6.32  \\
\midrule 
\rowcolor{gray!10} \method &\textbf{96.20±0.33} & \textbf{89.44±1.06 }& { } & \textbf{56.89±1.66}  & { } & \textbf{77.01±2.59} & \textbf{69.80±3.63} & { } &  \underline{74.79±4.90} &\textbf{ 68.20±4.19 }& { } &\textbf{71.39±9.59} & \textbf{78.33±9.03} \\
\bottomrule
\end{tabular}
}
\end{table*}

\subsubsection{Implementation Details}
We equip the proposed \method with two most popular deep backbone models for biological networks, Transformer~\cite{kan2022bnt} and GIN~\cite{xu_sum}, to verify its universal effectiveness with different models. For the architecture of Transformer, the number of transformer layers is set to 2,  followed by an MLP function to make the prediction. For each transformer layer, the hidden dimension is set to be the same as the number of nodes $n$, and the number of heads is set to 4.
Regarding the GCN backbone, we set the number of GCN layers as 3. The graph representation is obtained with a sum readout function to make the final prediction.
We randomly select 70\% of the datasets for training, 10\% for validation, and the remaining for testing. In the training process, we use the Adam optimizer with an initial learning rate of $10^{-4}$ and a weight decay of $10^{-4}$. The batch size is set as 16. All the models are trained for 200 epochs, and {the epoch with the best performance on the validation set is selected for the final report}.

\subsubsection{Baselines}
We include a variety of Mixup approaches as baselines. Given $\Lambda \in [0, 1]^{ v \times v}$, $\alpha \in (0, \infty)$, $\pi \in (0, 1)$, $\cdot$ is the dot product.

\noindent\textbf{V-Mixup}~\cite{zhang2018mixup} is the vanilla Mixup by the linear combination of two random samples, 
    \begin{align}
    \begin{split}
        \tilde{S} &= (1-\lambda) S_i + \lambda S_j, 
        \tilde{y} = (1 - \lambda) y_i + \lambda y_j, \\
        \lambda &\sim \operatorname{Beta}(\alpha, \alpha).
    \end{split}
    \end{align}
\textbf{D-Mixup} is the discrete Mixup, a naive baseline designed by ourselves. Given two randomly selected samples, a synthetic sample is generated by obtaining parts of the edges from one sample and the rest from the other,
\begin{align}
\begin{split}
    \tilde{S} &= (1-\Lambda) \cdot S_i + \Lambda \cdot S_j, 
    \tilde{y} = (1 - \lambda) y_i + \lambda y_j, \\
    \Lambda^{i,j} &\sim \mathrm B(\lambda), \lambda \sim \operatorname{Beta}(\alpha, \alpha).
\end{split}
\end{align}
\textbf{DropNode}~\cite{graphsage} randomly selects nodes given a sample and sets all edge weights related to these selected nodes as zero,
\begin{align}
\begin{split}
    \tilde{S} &= \Lambda \cdot S, \Lambda^{p,:} = \Lambda^{:,p} = z, z \sim \operatorname{Bernoulli}(\pi).
\end{split}
\end{align}
\textbf{DropEdge}~\cite{Rong2020DropEdge} randomly selects edges given a sample and assigns their weights as zero,
\begin{align}
\begin{split}
    \tilde{S} &= \Lambda \cdot S, \Lambda^{p,q} \sim \operatorname{Bernoulli}(\pi).
\end{split}
\end{align}
\textbf{G-Mixup}~\cite{han2022gmixup} is originally proposed for classification tasks, which augments graphs by interpolating the generator of different classes of graphs. Since each cell in a covariance and correlation matrix represents a specific edge in a graph, we can convert a graph generator into a group of generator for each edge. We model each edge generator as a conditional multivariate normal distribution $P(S^{p,q} \mid y)$. The augmentation process can be formulated as,
\begin{equation}
\begin{aligned}
\tilde{S}^{p,q} &\sim (1 - \lambda)P(S^{p,q} \mid y_i) + \lambda P(S^{p,q} \mid y_j),
\tilde{y} = (1 - \lambda) y_i + \lambda y_j, \\
\lambda &\sim \operatorname{Beta}(\alpha, \alpha).
\end{aligned}
\end{equation}
For the setting of classification, 
\begin{equation}
    P(S^{p,q} \mid y=c) \sim \mathcal{N}\left(\mu^{p,q}_c,(\sigma^{p,q}_c)^2\right),
\end{equation}
To extend G-Mixup for regression, we slightly modify the augmentation process to adapt it for regression tasks as 
\begin{dmath}
P(S^{p,q} \mid y) \sim \mathcal{N}\left(\mu^{p,q}+\frac{\sigma^{p,q}}{\sigma_y} \rho^{p,q}\left(y-\mu_y\right), \ \left(1-(\rho^{p,q})^2\right) (\sigma^{p,q})^2\right), 
\end{dmath}
where $\mu$ and $\sigma$ are the mean and standard deviation of the weight for each edge, $\rho$ is the correlation coefficient between $S^{p,q}$ and $y$.

\noindent\textbf{C-Mixup}~\cite{yao2022cmix} shares the same process with the V-Mixup. Instead of randomly selecting two samples, C-Mixup picks samples based on label distance to ensure the mixed pairs are more likely to share similar labels $(S_j, y_j) \sim P\left(\cdot \mid\left(S_i, y_i\right)\right)$, where $P$ is a sampling function which can sample
closer pairs of examples with higher probability. For classification tasks, it degenerates into the intra-class V-Mixup.

\begin{table*}[htbp]
\centering
\small
\caption{Detailed performance comparison of different sample sizes with Transformer as the backbone.}
\vspace{-1ex}
\label{tab:percent}
\resizebox{0.8\linewidth}{!}{
\begin{tabular}{ccccccccccccccc}
\toprule
\multirow{2.5}{*}{Percentage (in \%)} &\multicolumn{4}{c}{\bf Dataset: ABCD-BioGender (AUROC$\uparrow$)} & & \multicolumn{4}{c}{\bf Dataset: ABCD-Cog (MSE$\downarrow$)}\\
\cmidrule(lr){2-5} \cmidrule(lr){7-10} 
 & {w/o Mixup} & {V-Mixup}& {C-Mixup} &\cellcolor{gray!10}{\method} & & {w/o Mixup} & {V-Mixup}& {C-Mixup} &\cellcolor{gray!10}{\method} \\
\midrule
10 &87.14±1.15 & \underline{88.99±0.75} &  88.72±1.13 & \cellcolor{gray!10}\textbf{90.21±0.64 }& { } & 73.07±2.75 & 77.00±4.58& \underline{71.22±1.68} & \cellcolor{gray!10}\textbf{70.69±1.06} \\
20 &90.60±0.91 & 91.11±0.54 &  \underline{91.49±0.89} & \cellcolor{gray!10}\textbf{92.72±0.64} & { } & 69.70±2.75 &69.80±2.42 & \underline{69.30±3.21} &\cellcolor{gray!10}\textbf{66.50±2.50} \\
30 &92.60±0.51 & \underline{93.45±0.35}  & 93.33±0.78 & \cellcolor{gray!10}\textbf{93.93±0.55} & { } & 65.97±2.48 &65.84±1.11 & \underline{64.31±0.57} & \cellcolor{gray!10}\textbf{63.50±1.61}  \\
40 &92.84±0.40 & \underline{94.06±0.48} & 93.95±0.53&\cellcolor{gray!10}\textbf{ 94.12±0.21 }& { } & 63.91±4.07 &63.14±1.08 & \underline{61.88±2.93} & \cellcolor{gray!10}\textbf{61.15±1.80 }\\
50 &94.18±0.51 & \textbf{95.20±0.39} & \underline{95.03±0.57 }& \cellcolor{gray!10}94.78±0.98 & { } & 61.89±3.85 &63.45±1.65 & \underline{61.26±1.31} & \cellcolor{gray!10}\textbf{60.82±2.71} \\
60 &94.22±0.44 & \underline{95.19±0.54} & 95.17±0.32 &\cellcolor{gray!10}\textbf{ 95.65±0.37 } & { } & \underline{59.47±1.59} & 60.32±0.94& 60.20±1.58 & \cellcolor{gray!10}\textbf{58.75±1.65} \\
70 &94.18±0.40 &\textbf{95.51±0.18} &\underline{95.49±0.28} & \cellcolor{gray!10}95.07±0.18 & { } & 62.35±2.28 & 61.15±1.51 & \underline{60.54±3.57} & \cellcolor{gray!10}\textbf{60.17±0.50} \\
80 &95.18±0.31 & 95.60±0.42 & \underline{95.73±0.51} & \cellcolor{gray!10}\textbf{95.94±0.31} & { } & \underline{59.85±1.47} & 60.31±1.07 & 60.85±3.84 & \cellcolor{gray!10}\textbf{56.78±2.05}  \\
90 & \underline{95.55±0.86} & \textbf{95.92±0.34} & {95.49±0.73} & \cellcolor{gray!10}95.24±0.65 & { } & 61.17±3.36 & 61.51±0.78 & \underline{60.35±0.93} & \cellcolor{gray!10}\textbf{57.45±3.39} \\
100 &95.28±0.32 & 95.85±0.63 & \underline{96.01±0.48} & \cellcolor{gray!10}\textbf{96.20±0.33} & { } & 60.21±1.53 & 60.43±2.67 & \underline{59.68±1.15} & \cellcolor{gray!10}\textbf{56.89±1.66}\\
\bottomrule
\end{tabular}
}
\end{table*}
\begin{figure*}
    \centering
    \includegraphics[width=1.0\linewidth]{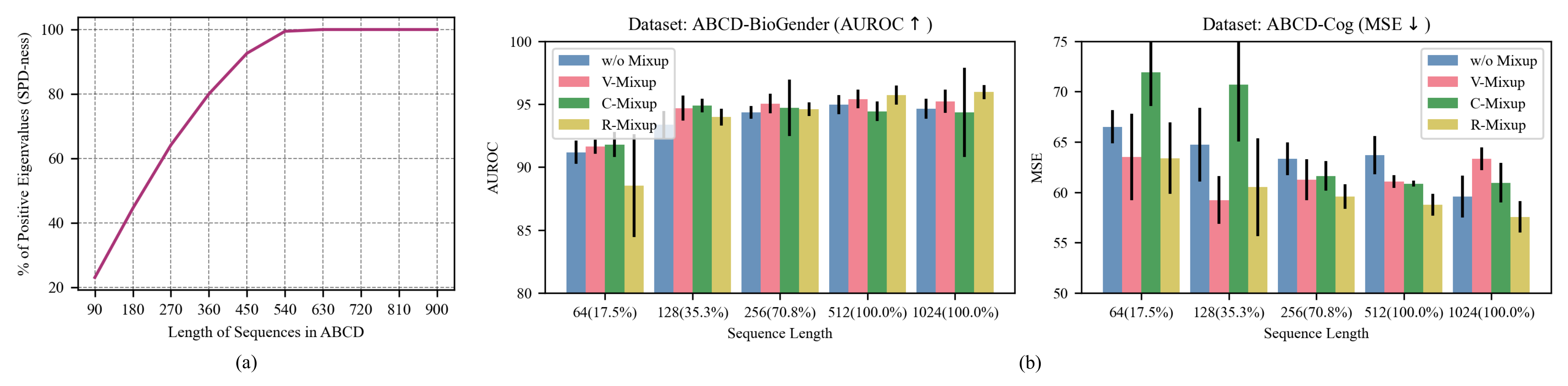}
    \vspace{-4ex}
    \caption{(a) The influence of time-series sequence length $t$ on the percentage of the positive eigenvalues (\%). (b) The influence of the sequence length $t$ or \textit{SPD-ness} (\%) on the prediction performance of classification and regression tasks.}
    \vspace{-1ex}
    \label{fig:sl}
\end{figure*}

\subsection{RQ1: Performance Comparison}\label{sec:overall}
\noindent \textbf{Overall Performance.} The overall comparison based on the Transformer and GCN backbone are presented in Table \ref{tab:overall} and Table \ref{tab:gcn} respectively, where \textit{ABCD-BioGender}, \textit{PNC}, \textit{ABIDE}, and \textit{TCGA-Cancer} focus on classification tasks, while \textit{ABCD-Cog} is a regression task. 
Since the performance of the two backbones demonstrates similar patterns, we focus on the result discussion of the Transformer due to the space limit. 
Specifically, for classification tasks, incorporating the Mixup technique can constantly improve the performance, especially on the \textit{TCGA-Cancer} dataset, which features a small sample size with high dimensional matrices. Among the various Mixup techniques, our proposed \method performs the best across datasets and tasks, indicating the further advantage of using log-Euclidean metrics instead of Euclidean metrics for SPD matrices mixture. Besides, for datasets with a relatively smaller sample size, such as PNC, ABIDE, and TCGA-Cancer, \method can further reduce training variance and stabilize the final performance compared with other data augmentation methods.

Compared with the improvements on classification tasks, \method demonstrates a more significant advantage on the regression task. It is shown that \method can significantly reduce the MSE compared with the baseline without Mixup (5.5\% with the transformer backbone) and archive a large advantage over the second runner (4.8\% with the transformer backbone).
It is also noted that other Mixup approaches sometimes hurt the model performance, indicating the Euclidean space cannot measure the distance between SPD matrices very well, and the mixed samples may not be paired with the correct labels. In contrast, our proposed log-Euclidean metric can correctly represent the distance among SPD matrices and therefore address the problem of \emph{arbitrarily incorrect label}.

\noindent \textbf{Performance with Different Sample Sizes.} 
As collecting labeled data can be extremely expensive for biological networks in practice, we adopt {\method} for the challenging low-resource setting to justify its efficacy with limited labeled data only. 
For this set of experiments, we vary the training sample size from {10\%} to {100\%} of the full datasets to show the performance of \method based on transformers with different sample sizes. 
Specifically, the ABCD dataset is adopted in this detailed analysis due to its relatively large sample size and supports for both classification and regression tasks. The selected comparing methods are the strongest baselines, namely V-Mixup and C-Mixup, from the overall performance in Table \ref{tab:overall}. Results are presented in Table \ref{tab:percent}. 

On the classification task of BioGender prediction, impressively, the proposed \method can already achieve a decent performance with only 10\% percent of full datasets and demonstrates a large margin over other compared methods. 
As the sample size becomes larger, the performance of different data augmentation methods tends to be close, while the proposed \method reaches the best performance for most of the cases (7 out of 10 setups). 
On the more challenging regression task of Cognition Summary Score prediction, \method consistently outperforms the other two baselines under different portions of the training data, which stresses the absolute advantages of our proposed \method in its flexible and effective adaption for the regression settings. Note that when equipped with an inappropriate augmentation method (i.e., V-Mixup), the regression performance can always deteriorate under different volumes of training data.  
This implies the necessity of proposing appropriate Mixup techniques tailored for biological networks 
to address specific challenges for regression tasks. Furthermore, we propose a case study in Appendix \ref{appendix:case_study} to show why R-Mixup can achieve the best performance for the Regression task in the ABCD-Cog dataset.

\subsection{RQ2: The Relations of Sequence Length, SPD-ness and Model Performance}\label{sec:spd}

To quantitatively verify the necessary conditions of SPD matrices in Proposition~\ref{prop:MCondition}, we vary the length of sequences whose pairwise correlations compose the network matrices and observe its influence on the percentage of positive eigenvalues and the final prediction performance. 
For better illustration, we define a new terminology \emph{SPD-ness} to reflect the percentage of positive values among all eigenvalues. 
The higher the percentage of positive eigenvalues, the higher \emph{SPD-ness}, and a full SPD matrice requires all the eigenvalues to be positive. Specifically, we choose the dataset with the longest time sequence, namely ABCD, to facilitate this study. Since samples in the ABCD dataset are of different sequence lengths, we simply select those with sequence length longer than 1024 and truncate them to 1024 to form a length-unified dataset \textit{ABCD-1024}, leading to 4613 samples for the \textit{ABCD-BioGender} classification task and 4533 samples for the \textit{ABCD-Cog} regression task. 

First, we investigate the relationship between the length of biological sequences $t$ and the \emph{SPD-ness} of the corresponding network matrix. The results are shown in Figure \ref{fig:sl}(a), where the value of sequence length $t$ is varied from 90 to 900 with a step size of 90. For each given $t$, we construct the correlation matrices based on each pair of the truncated sequences with only the first $t$ elements from the original sequences. Then the eigenvalue decomposition is applied to each obtained correlation matrix, and the percentage of positive ($>10^{-6}$) eigenvalues are calculated. The reported results are the average over all the correlation matrices. From this curve, we observe that the percentage of positive eigenvalues grows gradually as the time-series length increases. The growth trend gradually slows down, reaching a percentage point saturation at about the length of 540, where the full percentage indicates full \textit{SPD-ness}. Note that the number of variables $n$ for the ABCD dataset is 360. This aligns with our conclusion in Proposition~\ref{prop:MCondition} that a necessary condition for correlation matrices satisfying SPD matrices is $t \geq n$.

Second, with the verified relation between the sequence length $t$ and \textit{SPD-ness}, we study the influence of sequence length $t$ or \textit{SPD-ness} on the prediction performance. We observe that directly truncating the time series to length $t$ will lose a huge amount of task-relevant signals, resulting in a significant prediction performance drop.   
As an alternative, we reduce the original sequence to length $t$ by taking the average of each $1024/t$ consecutive sequence unit. Results on the classification task \textit{ABCD-BioGender} and the regression task \textit{ABCD-Cognition} with the input of different time-series length $t$ are demonstrated in Figure \ref{fig:sl}(b). 
It shows that for the classification task, although V-Mixup and C-Mixup demonstrate an advantage when the percentage of positive eigenvalues is low, the performance of the proposed \method continuously improves as the sequence length $t$ increases and finally beats the other baselines. 
For the regression task, our proposed \method consistently performs the best regardless of the \textit{SPD-ness} of the correlation matrices. The gain is more observed when the dataset matrices are full SPD. 
Combining these observations from both classification and regression tasks, we prove that the proposed \method demonstrates superior advantages for mixing up SPD matrices and facilitating biological network analysis that satisfies full \textit{SPD-ness}.

\begin{figure}
    \centering
    \includegraphics[width=1.0\linewidth]{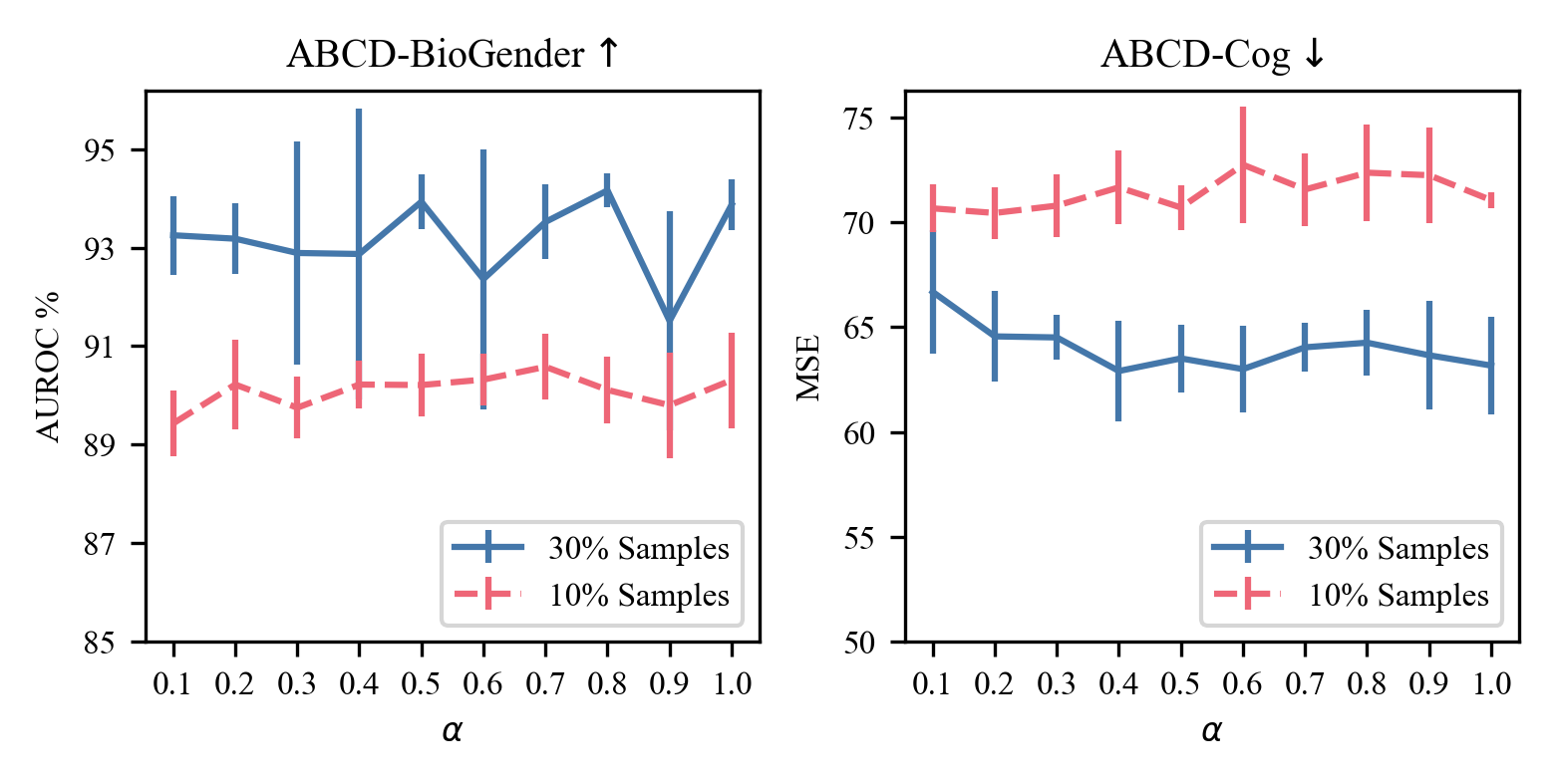}
    \vspace{-2ex}
    \caption{The influence of the key hyperparameter ($\alpha$) value on the performance of classification and regression tasks. }
    \label{fig:alpha}
    \vspace{-3.5ex}
\end{figure}

\begin{figure}
	\centering
	\includegraphics[width=0.8\linewidth]{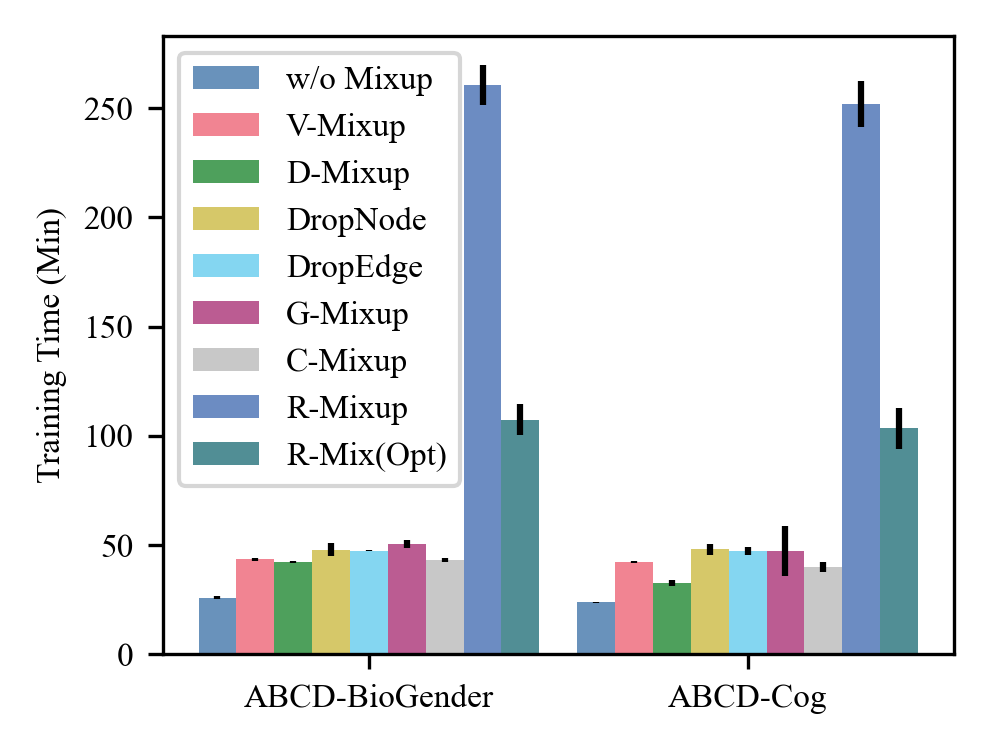}
	\caption{Training Time of different Mixup methods on the large ABCD dataset. \method is the original model while R-Mix(Opt) is time-optimized as discussed in Section \ref{sec:method_rt}.}
 \vspace{-2ex}
	\label{fig:ABCD_running}
\end{figure}

\subsection{RQ3: Hyperparameter and Efficiency Study}\label{sec:rq3}
\noindent \textbf{The Influence of Key Hyperparameter $\alpha$.} We study the influence of the key hyperparameter $\alpha$ in \method, which correspondingly changes the Beta distribution of $\lambda$ in Equation (\ref{equ: r-mixup}). Specifically, the value of $\alpha$ is adjusted from $0.1$ to $1.0$, and the corresponding prediction performance under the specific values is demonstrated in Figure \ref{fig:alpha}. We observe that the prediction performance of both classification and regression tasks are relatively stable as the value of $\alpha$ varies, indicating that the proposed \method is not sensitive to the key hyperparameter $\alpha$.

\noindent \textbf{Efficiency Study.} To further investigate the efficiency of different Mixup methods, we compare the training time of different data augmentation methods on the large-scale dataset, ABCD, to highlight the difference. The results are shown in Figure \ref{fig:ABCD_running}. Besides, the running time comparison on three smaller datasets, ABIDE, PNC, and TCGA-Cancer are also included in appendix~\ref{appendix:runningtime} for reference. All the compared methods are trained with the same backbone model~\cite{kan2022bnt}. It is observed that with the precomputed eigenvalue decomposition, the training speed of the optimized \method on the large ABCD dataset can be 2.5 times faster than the original model without optimization. Besides, on the smaller datasets such as PNC, ABIDE, and TCGA-Cancer, there is no significant difference in elapsed time between different methods.

\section{Conclusion}
\label{sec:conclusion}

In this paper, we present \method, an effective data augmentation method
tailored for biological networks that leverage the log-Euclidean distance metrics from the Riemannian manifold. We further propose an optimized strategy to improve the training efficiency of \method.
Empirical results on five real-world biological network datasets spanning both classification and regression tasks demonstrate the superior performance of \method over existing commonly used data augmentation methods under various data scales and downstream applications. Besides, we theoretically verify a necessary condition overlooked by prior works to determine whether a correlation matrix is SPD and empirically demonstrate how it affects the prediction performance, which we expect to guide future applications spreading the biological networks.
\section{Acknowledgments}
This research was supported in part by the University Research Committee of Emory University and the National Institute Of Diabetes And Digestive And Kidney Diseases of the National Institutes of Health under Award Number K25DK135913.  The authors also gratefully acknowledge support from NIH under award number R01MH105561 and R01MH118771. The content is solely the responsibility of the authors and does not necessarily represent the official views of the National Institutes of Health. Besides, the acknowledgment of used datasets can be found in Appendix \ref{sec:dataset}.
\newpage
\clearpage
\balance
\bibliographystyle{ACM-Reference-Format}
\bibliography{reference}
\clearpage

\appendix
\section{Covariance, Correlation and Positive Definite Matrices}\label{sec:SPD}

We provide detailed definitions on covariance, correlation and positive definite matrices with necessary properties here.

\begin{definition}\label{def:Cov}
	Let $X = (X_i) = (x_{ik})$ with $i = 1,...,n$ and $k = 1,...,t$ be $t$-dimensional vectors of $n$ variables. The corresponding \emph{covariance matrix} $\text{Cov}(X)$ is defined as
	\begin{align}\label{eq:Cov}
		\begin{aligned}
			\text{Cov}(X)_{ij} & = \frac{1}{t} \Big( \sum_k (x_{ik} - E(X_i)) (x_{jk} -  E(X_j)) \Big) \\
			= & E(X_i X_j) - E(X_i) E(X_j).
		\end{aligned}
	\end{align}
	The \emph{correlation matrix} is normalized as:
	\begin{align}\label{eq:Cor}
		\begin{aligned}
			\text{Cor}(X) = & \text{diag}(\frac{1}{\sqrt{ \text{Cov}(X)_{11}}},...,\frac{1}{\sqrt{\text{Cov}(X)_{vv} }}) \cdot \\
			& \text{Cov}(X) \cdot \text{diag}(\frac{1}{\sqrt{ \text{Cov}(X)_{11} }},...,\frac{1}{\sqrt{\text{Cov}(X)_{vv} }}).
		\end{aligned}
	\end{align} 
	Expressed by matrix entries, we restore the familiar \emph{Pearson correlation coefficients}:
	\begin{align}
		\text{Cor}(X)_{ij} = \frac{\text{Cov}(X)_{ij} }{\sqrt{\text{Cov}(X)_{ii}} \sqrt{\text{Cov}(X)_{jj}} }. 
	\end{align}
\end{definition}

\begin{remark}
	It should be noted that to make the definition of $\text{Cor}(X)$ valid, $\text{Cov}(X)_{ii} \neq 0$ for all $i$. Since
	\begin{align}
		\text{Cov}(X)_{ii} = E(X_i X_i) - E(X_i) E(X_i) = \frac{1}{t} \sum_k x_{ik}^2 - \Big( \frac{1}{t} \sum_k x_{ik} \Big)^2,
	\end{align}
	the geometric mean inequality says that $\text{Cov}(X)_{ii}$ vanishes only when $x_{ik}$ are identical, which does not happen in our case.  
\end{remark}

\begin{definition}
	A symmetric $n \times n$ matrix $S$ is \emph{positive semi-definite} if for any vector $u \in \mathbb{R}^n$, $u^T S u \geq 0$. Equivalently, this means that the eigenvalues of $S$ are all nonnegative. If the inequality holds strictly, $S$ is said to be \emph{positive definite}, or \emph{symmetric positive definite}, or SPD for short.
\end{definition}

\begin{proposition}\label{prop:Condition}
	Covariance and correlation matrices are positive semi-definite. A necessary condition for them to be positive definite is that the length of each sample is no less than the number of variables, i.e., $t \geq n$.
\end{proposition}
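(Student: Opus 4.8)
The plan is to reduce both assertions to elementary linear algebra by recognizing the covariance matrix as a scaled Gram matrix. First I would introduce the centered data matrix $\tilde{X} = (\tilde{x}_{ik}) \in \mathbb{R}^{n \times t}$ with entries $\tilde{x}_{ik} = x_{ik} - E(X_i)$, so that Eq.~\eqref{eq:Cov} reads compactly as $\text{Cov}(X) = \tfrac{1}{t}\,\tilde{X}\tilde{X}^T$. Positive semi-definiteness is then immediate: for any $u \in \mathbb{R}^n$ we have $u^T \text{Cov}(X) u = \tfrac{1}{t}\, u^T \tilde{X}\tilde{X}^T u = \tfrac{1}{t}\,\Vert \tilde{X}^T u \Vert^2 \ge 0$. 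For the correlation matrix, Eq.~\eqref{eq:Cor} exhibits $\text{Cor}(X) = D\,\text{Cov}(X)\,D$ with $D = \text{diag}\big(\text{Cov}(X)_{11}^{-1/2}, \dots, \text{Cov}(X)_{nn}^{-1/2}\big)$, which is well defined and invertible thanks to the Remark following Definition~\ref{def:Cov} (the diagonal entries of $\text{Cov}(X)$ are strictly positive). Hence $\text{Cor}(X) = \tfrac{1}{t}\,(D\tilde{X})(D\tilde{X})^T$ is again a scaled Gram matrix --- equivalently, it is congruent to a positive semi-definite matrix --- and is therefore positive semi-definite by Sylvester's law of inertia.

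For the necessary condition I would argue via rank. From $\tilde{X}\tilde{X}^T x = 0 \iff \Vert \tilde{X}^T x \Vert^2 = 0 \iff \tilde{X}^T x = 0$ one gets $\ker(\tilde{X}\tilde{X}^T) = \ker(\tilde{X}^T)$, so $\operatorname{rank}(\text{Cov}(X)) = \operatorname{rank}(\tilde{X}\tilde{X}^T) = \operatorname{rank}(\tilde{X}) \le t$, the last inequality because $\tilde{X}$ has only $t$ columns. If $t < n$ this forces $\operatorname{rank}(\text{Cov}(X)) < n$, hence $\text{Cov}(X)$ is singular and has $0$ among its eigenvalues, contradicting positive definiteness (Definition~\ref{def:SPD}). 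The contrapositive is precisely the claim $t \ge n$; the same conclusion transfers verbatim to $\text{Cor}(X) = D\,\text{Cov}(X)\,D$, since left and right multiplication by the invertible $D$ leaves the rank unchanged.

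The argument is essentially routine, and I do not expect a genuine obstacle; the only points requiring care are invoking $\operatorname{rank}(AA^T) = \operatorname{rank}(A)$ correctly over $\mathbb{R}$ and checking that $D$ is well defined, both of which are dispatched above. One refinement worth a remark: every row of $\tilde{X}$ sums to zero (since $\sum_k \tilde{x}_{ik} = \sum_k x_{ik} - t\,E(X_i) = 0$), so in fact the row space of $\tilde{X}$ lies in the hyperplane orthogonal to $\mathbf{1}_t$ and $\operatorname{rank}(\tilde{X}) \le t-1$, yielding the slightly stronger necessary condition $t \ge n+1$. I would nonetheless state the Proposition with the weaker bound $t \ge n$, since that is the form used throughout the experiments, and perhaps note the sharper inequality in passing.
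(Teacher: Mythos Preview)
Your proof is correct and essentially identical to the paper's: the paper writes $\text{Cov}(X)=\tfrac{1}{t}\sum_k Y_k Y_k^T$ with $Y_k$ the columns of your $\tilde{X}$, verifies $u^T\text{Cov}(X)u=\tfrac{1}{t}\sum_k(Y_k^Tu)^2\ge 0$, handles $\text{Cor}(X)$ by the substitution $\tilde u=Du$, and then argues that if $\{Y_k\}_{k=1}^t$ fails to span $\mathbb{R}^n$ there is a $u$ orthogonal to all $Y_k$---which is exactly your rank argument in different words. Your remark that the centering forces $\operatorname{rank}(\tilde{X})\le t-1$, hence the sharper necessary condition $t\ge n+1$, is a nice refinement the paper does not mention.
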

\begin{proof}
	Recall Eq.\eqref{eq:Cov} from Definition \ref{def:Cov}, let us consider column vectors $Y_k = (x_{ik} - E(X_i))$. Then $\text{Cov}(X) = \frac{1}{t} \sum_k Y_k Y_k^T$. Given any vector $u \in \mathbb{R}^n$,
	\begin{align}
		u^T \text{Cov}(X) u = \frac{1}{t} \sum_i u^T Y_k Y_k^T u =  \frac{1}{t} \sum_i (Y_k^T u )^2 \geq 0.
	\end{align} 
	On the other hand, by Eq.\eqref{eq:Cor}
	\begin{align}
		u^T \text{Cor}(X) u & = u^T \text{diag}(\frac{1}{\sqrt{ \text{Cov}(X)_{11} }},...,\frac{1}{\sqrt{\text{Cov}(X)_{vv} }}) \cdot \notag \\
		& \text{Cov}(X) \cdot \text{diag}(\frac{1}{\sqrt{ \text{Cov}(X)_{11} }},...,\frac{1}{\sqrt{\text{Cov}(X)_{vv} }}) u   \\ 
		& = \tilde{u}^T \text{Cov}(X) \tilde{u} \geq 0 \notag.
	\end{align}  
	
	If $\{Y_k\}_{k = 1}^t$ spans the whole vector space $\mathbb{R}^n$, in which case $t$ must be no less than $n$, then $\text{Cov}(X)$ is positive definite. Otherwise, there must be some vector $u$ perpendicular to all $Y_k$, which leads to $\sum_i (Y_k^T u )^2 = 0$.
\end{proof}



\section{Geodesics and Swelling Effect}\label{sec:swell}

We list the geodesic equation and Riemannian distance function induced from log-Euclidean metric on $\text{Sym}^+(n)$ here followed by a rigorous proof on the swelling effect.

\begin{definition}
	Let $\text{Sym}^+(n)$ denote the manifold of positive definite matrices equipped with the \emph{log-Euclidean metric}. Analytically, the induced distance function reads
	\begin{align}
		d(S_i, S_j) = \Vert \log S_i - \log S_j \Vert,
	\end{align}
    which measures the distance between different two points $S_i, S_j \in \text{Sym}^+(n)$, with the following geodesic connecting two points:
	\begin{align}
		\gamma(\lambda) = \exp( (1-\lambda) \log S_i + \lambda \log S_j).
	\end{align}
	Detailed derivation of the geodesics equation can be found in \cite{Sakai1996,Lee2018,Gallier2020}.
\end{definition}

\begin{proposition}[Swelling Effect]
	Given arbitrary $S_i, S_j \in \text{Sym}^+(n)$, then
	\begin{align}
		\begin{aligned}
			& \det \big( \exp( (1 - \lambda)\log S_i + \lambda \log S_j ) \big) \\
			\leq & \det \big( ( (1- \lambda)\log S_i + \lambda \log S_j ) \big).
		\end{aligned}
	\end{align} 
	Especially,
	\begin{align}
		\begin{aligned}
			& \min\{\det S_i, \det S_j\} \\
			\leq & \det \big( \exp( (1-\lambda) \log S_i + \lambda \log S_j ) \big) \leq \max\{\det S_i, \det S_j\},
		\end{aligned}
	\end{align}
	while $\det \big( (1-\lambda) \log S_i + \lambda \log S_j \big)$ would exceed the determinants of both $S_i$ and $S_j$ as shown in Figure \ref{fig:motivation2} in the main text.
\end{proposition}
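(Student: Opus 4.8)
The plan is to exploit the simultaneous diagonalizability that the log-Euclidean construction secretly relies on, reducing everything to scalar inequalities about eigenvalues. First I would write $A := (1-\lambda)\log S_i + \lambda \log S_j$, which is symmetric (as a convex combination of symmetric matrices), and let $a_1,\dots,a_n$ be its real eigenvalues. Then $\exp(A)$ is positive definite with eigenvalues $e^{a_1},\dots,e^{a_n}$, so by the multiplicative property of the determinant,
\begin{align}
\det\big(\exp(A)\big) = \prod_{k=1}^n e^{a_k} = \exp\Big(\sum_{k=1}^n a_k\Big) = \exp\big(\operatorname{tr} A\big),
\qquad
\det(A) = \prod_{k=1}^n a_k.
\end{align}
So the first claimed inequality $\det(\exp A)\le \det(A)$ is nothing but the scalar inequality $\exp(\sum_k a_k)\le \prod_k a_k$; note this form of the statement implicitly requires $A$ to have positive eigenvalues for $\det(A)$ to even be an upper bound, and I would flag that the honest content is really the comparison displayed in Figure~\ref{fig:motivation2} rather than a universal inequality. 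The substantive and always-true part is the sandwich $\min\{\det S_i,\det S_j\}\le \det(\exp A)\le \max\{\det S_i,\det S_j\}$, and that is what I would prove cleanly.

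For the sandwich, the key observation is $\det(\exp A) = \exp(\operatorname{tr} A)$ together with the linearity of the trace:
\begin{align}
\operatorname{tr} A
= (1-\lambda)\operatorname{tr}(\log S_i) + \lambda\operatorname{tr}(\log S_j)
= (1-\lambda)\log\det S_i + \lambda\log\det S_j,
\end{align}
where the last equality uses $\operatorname{tr}(\log S) = \log\det S$ for $S\in\text{Sym}^+(n)$ (immediate from Eq.~\eqref{eq:log_exp}, since $\operatorname{tr}(\log S)=\sum_k\log\mu_k=\log\prod_k\mu_k$). Hence
\begin{align}
\det(\exp A) = \exp\big((1-\lambda)\log\det S_i + \lambda\log\det S_j\big) = (\det S_i)^{1-\lambda}(\det S_j)^{\lambda},
\end{align}
a weighted geometric mean of $\det S_i$ and $\det S_j$ with weights summing to $1$. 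A weighted geometric mean of two positive numbers always lies between them, which gives exactly the desired two-sided bound, with equality iff $\det S_i=\det S_j$. This also makes transparent why the Euclidean interpolant swells: $\det\big((1-\lambda)\log S_i+\lambda\log S_j\big)=\prod_k a_k$ has no reason to be controlled by the individual determinants, and the AM–GM-type gap between $\prod_k a_k$ and $\exp(\sum_k a_k)$ is precisely the inflation seen numerically.

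The main obstacle is essentially bookkeeping rather than depth: one must be careful that $\log S_i$ and $\log S_j$ need \emph{not} commute, so $A$ is \emph{not} simultaneously diagonalizable with either $\log S_i$ or $\log S_j$ in general — which is exactly why $\det(A)=\prod a_k$ cannot be compared termwise to $\det S_i,\det S_j$ and the first (weaker, figure-level) inequality is not universally an honest bound. The trace route sidesteps this entirely because $\operatorname{tr}$ is linear and basis-independent, so no commutativity is needed for $\det(\exp A)$. I would therefore organize the write-up as: (1) reduce $\det(\exp A)$ to $\exp(\operatorname{tr} A)$; (2) compute $\operatorname{tr} A$ by linearity and $\operatorname{tr}\log=\log\det$; (3) identify the result as a weighted geometric mean and invoke the elementary bound $\min\{x,y\}\le x^{1-\lambda}y^{\lambda}\le\max\{x,y\}$ for $x,y>0$, $\lambda\in[0,1]$; and (4) remark separately (for the determinant of the Euclidean mixup) that $\prod_k a_k \ge \exp(\sum_k a_k)$ fails to be bounded above by $\max\{\det S_i,\det S_j\}$, with the numerical example in Figure~\ref{fig:motivation2} as the witness.
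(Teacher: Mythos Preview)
Your treatment of the sandwich inequality is correct and in fact cleaner than the paper's: both you and the paper reduce $\det(\exp A)$ to $(\det S_i)^{1-\lambda}(\det S_j)^{\lambda}$ via $\det\exp=\exp\operatorname{tr}$ and linearity of trace, but where you simply observe that a weighted geometric mean of two positive numbers lies between them, the paper takes a small detour through the inequality $a^{\lambda}-1\le (a-1)\lambda$ to obtain the weighted AM--GM bound $(\det S_i)^{1-\lambda}(\det S_j)^{\lambda}\le (1-\lambda)\det S_i+\lambda\det S_j$, which is a stronger intermediate step than needed for the max bound.

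The real discrepancy is on the first inequality. You read the right-hand side literally as $\det\big((1-\lambda)\log S_i+\lambda\log S_j\big)=\det(A)$ and correctly flag that $\exp(\sum_k a_k)\le\prod_k a_k$ is not a genuine universal inequality. The paper's own proof, however, establishes something different: it shows
\[
\det\big(\exp((1-\lambda)\log S_i+\lambda\log S_j)\big)=(\det S_i)^{1-\lambda}(\det S_j)^{\lambda}\le \det\big((1-\lambda)S_i+\lambda S_j\big),
\]
i.e.\ the comparison is with the determinant of the \emph{Euclidean} mixup $(1-\lambda)S_i+\lambda S_j$, not of $A$. So the displayed statement carries a typo, and the intended content is precisely the swelling comparison of Figure~\ref{fig:motivation2}. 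The paper proves this via a Gaussian-integral trick: writing $\sqrt{(2\pi)^n/\det S}=\int e^{-\frac12\boldsymbol{x}^T S\boldsymbol{x}}\,d\boldsymbol{x}$, splitting the exponent as $(e^{-\frac12\boldsymbol{x}^T S_i\boldsymbol{x}})^{1-\lambda}(e^{-\frac12\boldsymbol{x}^T S_j\boldsymbol{x}})^{\lambda}$, and applying H\"older's inequality to get $(\det S_i)^{1-\lambda}(\det S_j)^{\lambda}\le\det((1-\lambda)S_i+\lambda S_j)$. This step is absent from your proposal (understandably, given the typo), so if you want to match the paper's actual claim you would need to add an argument of this type; the Gaussian/H\"older route is one option, and the log-concavity of $S\mapsto\det S$ on $\text{Sym}^+(n)$ is an equivalent, perhaps more direct, alternative.
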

\begin{proof}
	To prove the first inequality, we note one basic fact of matrix exponential: $\det (\exp(A)) = \exp (\text{Tr} A)$. Thus,
	\begin{align}
		\begin{aligned}
			& \det \big( \exp(  (1-\lambda)\log S_i + \lambda \log S_j ) \big) \\
			= & \exp \big( \text{Tr}( (1-\lambda)\log S_i + \lambda \log S_j) \big) \\
			= & \exp \big( (1-\lambda) \text{Tr} \log S_i \big) \exp\big( \lambda \text{Tr} \log S_j) \big) \\
			= & \big(\exp \text{Tr} \log S_i \big)^{1-\lambda}  \big( \exp \text{Tr} \log S_j \big)^\lambda \\
			= & (\det S_i)^{1-\lambda} (\det S_2)^\lambda.
		\end{aligned}
	\end{align}
	Then we make use of the following identity of n-dimensional Gaussian integral:
	\begin{align}
		\int \exp(-\frac{1}{2} \boldsymbol{x}^T S \boldsymbol{x}) d\boldsymbol{x} = \sqrt{\frac{(2\pi)^n}{\det S}},
	\end{align}
	where $\boldsymbol{x} \in \mathbb{R}^n$ and $S \in \text{Sym}^+(n)$. In our case,
	\begin{align}
		\begin{split}
			& \sqrt{\frac{(2\pi)^n}{ \det \big( ( (1-\lambda) S_i + \lambda S_j) \big) }} \\
			= & \int \exp(-\frac{1}{2} \boldsymbol{x}^T ( (1-\lambda) S_i + \lambda S_j) \boldsymbol{x}) d\boldsymbol{x} \\
			= & \int \Big( \exp(-\frac{1}{2} \boldsymbol{x}^T S_i \boldsymbol{x}) \Big)^{1-\lambda} \Big( \exp( -\frac{1}{2} \boldsymbol{x}^T S_j \boldsymbol{x})  \Big)^\lambda d\boldsymbol{x}  \\
			\leq & \Big( \int \exp(-\frac{1}{2} \boldsymbol{x}^T S_i \boldsymbol{x}) d\boldsymbol{x} \Big)^{1-\lambda} \Big( \int \exp( -\frac{1}{2} \boldsymbol{x}^T S_j \boldsymbol{x})  d\boldsymbol{x} \Big)^\lambda \\
			= & \sqrt{\frac{(2\pi)^n}{ (\det S_i)^{1-\lambda} (\det S_j)^\lambda }}.
		\end{split}
	\end{align}
	We use Hölder's inequality in the last step from above, which yields
	\begin{align}
		\begin{aligned}
			& \det \big( \exp( (1-\lambda)\log S_i + \lambda \log S_j) \big) \\
			= & (\det S_i)^{1-\lambda} (\det S_j)^\lambda \leq \det \big( ( (1-\lambda) S_i + \lambda S_j) \big).
		\end{aligned}
	\end{align}
	
	To prove the second inequality, let us assume $\det S_i \leq \det S_j$ and let $a = \det S_j/\det S_i \geq 1$. It is straightforward to check that $a^\lambda -1 \leq (a-1)\lambda$ when $0 \leq \lambda \leq 1$. This fact indicates that
	\begin{align}
		\begin{aligned}
			& (\frac{\det S_j}{\det S_i})^\lambda -1 \leq (\frac{\det S_j}{\det S_i} - 1)\lambda \\
			\implies & (\frac{\det S_j}{\det S_i})^\lambda \leq (\frac{\det S_j}{\det S_i} )\lambda + (1-\lambda) \\
			\implies & (\det S_i)^{1-\lambda} (\det S_j)^\lambda \leq (\det S_i)(1-\lambda) + (\det S_j)\lambda,
		\end{aligned}
	\end{align}
	and finishes the proof.
\end{proof}


\section{Kernel Regression on $\text{Sym}^+(n)$}\label{sec:kernel}

We now present the proof details of Theorem \ref{Thm:comparison} from the main text. To begin with, we introduce a method from heat kernel theory \cite{Berline2004} to generalize to Euclidean Gauss kernel
\begin{align}
	K_E(S_i, \tilde{S}) = \frac{1}{(2\pi \sigma^2)^{n^2/2}} \exp(- \frac{1}{2\sigma^2} \Vert S_i - \hat{S} \Vert^2)
\end{align}
on $\text{Sym}^+(n)$ over which our samples are distributed. The notion of \emph{geodesic regression} \cite{Peter2007,Fletcher2011} would also become apparent as we move forward. Let us first consider the following classical heat equation on Euclidean space 
\begin{align}\label{eq:HeatEquationApp}
	\Big( \frac{\partial}{\partial t} - \sum_i \frac{\partial^2}{\partial x_i^2} \Big)f = \frac{\partial f}{\partial t} + \Delta f = 0,
\end{align}
where $\Delta = \sum_i \frac{\partial^2}{\partial x_i^2}$ is the Laplacian. A solution $f(x,t)$ to this equation is interpreted as the temperature at position $x$ and time $t$. Substituting $t = \sigma^2/2$ into Eq.\eqref{eq:EGauss}, it can be check by definition that the function
\begin{align}\label{eq:EHeatKernel}
	K_t(x,y) = \frac{1}{(4\pi t)^{n^2/2}} \exp(- \frac{1}{4t} \Vert x - y \Vert^2)
\end{align}
solves Eq.\eqref{eq:HeatEquationApp}. It is called the \emph{fundamental solution} or \emph{heat kernel} as any other solutions to Eq.\eqref{eq:HeatEquationApp} can be written as the convolution with a certain function $f(y)$:
\begin{align}
	f(x,t) = \int_{\mathbb{R}^{n \times n}} K_t(x,y) f(y) dy.
\end{align} 

Based on this fact, it is natural to define Riemannian Gauss kernel as the fundamental solution to heat equation on Riemannian manifolds. To this end, we need to replace the Laplacian on Euclidean spaces by \emph{Laplace–Beltrami operator}, still denoted by $\Delta$, on manifolds. The formal definition of this operator is unnecessary here and we recommend interested readers to \cite{Berline2004,Gallier2020} for more details. It is enough to known its local coordinate expression for our purpose. Specifically, being different from Euclidean spaces with a standard and explicit \emph{coordinate system}, i.e., any vector $\boldsymbol{x} \in \mathbb{R}^n$ can be explicitly expressed by its components (coordinates) $x_i$, the coordinates of points $p$ in a manifold $M$ always need being defined exclusively depending on the concerned manifold. In the most general case, it is only known that manifolds admit local coordinate parametrizations for its local regions as they resemble Euclidean spaces. Expressed by any local coordinates,
\begin{align}\label{eq:LBOperator}
	\Delta f = \sum_{i,j} g^{ij} \Big( \frac{\partial^2 f}{\partial x_i^2} - \Gamma^k_{ij} \frac{\partial f}{\partial x_k} \Big)
\end{align}
where $g^{ij}, \Gamma^k_{ij}$ are called dual metric and Christoffel symbols and are all determined by the Riemannian metric \cite{Sakai1996,Lee2018}. Now we wish to analyze the heat equation expressed by local coordinates. However due to its intricate form involving the Riemannian metric, it is generally impossible to solve the equation analytically. Even though, the following theorem is established in heat kernel theory using advanced tools from differential geometry:

\begin{theorem} \cite{Berline2004} \label{Thm:HeatKernel}
	Let $M$ be a complete Riemannian manifold, then there exists a function $K_t(p,q)$, called heat kernel, with the following properties
	\begin{enumerate}
		\item $K_t(p,q) = K_t(q,p)$ for all $p,q \in M$.
		
		\item $\lim_{t \to 0} K_t(p,q)$ equals the Dirac delta function $\delta_x(y)$.
		
		\item $K_t(p,q)$ is positive definite and solves the heat equation.
		
		\item $K_t(p,q) = \int_M K_{t-s}(p,p') K_{s}(p',q) dp'$ for any $s > 0$.
	\end{enumerate}
\end{theorem}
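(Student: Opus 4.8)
The plan is to construct the heat kernel by combining a short-time parametrix near the diagonal with the functional-analytic semigroup generated by the Laplace--Beltrami operator, and then to read off the four asserted properties from this construction. Completeness of $M$ enters decisively throughout: by the Gaffney--Chernoff theorem it guarantees that $\Delta$, initially defined on compactly supported smooth functions, is essentially self-adjoint, so that its closure $\overline{\Delta}$ generates via the spectral theorem a self-adjoint, positivity-preserving contraction semigroup $P_t = e^{t\overline{\Delta}}$ on $L^2(M)$ (recall the sign convention of Eq.\eqref{eq:HeatEquationApp}, for which the heat operator is $\partial_t - \Delta$ and $\overline{\Delta}\le 0$). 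The heat kernel $K_t(p,q)$ is then identified as the Schwartz integral kernel of $P_t$, whose existence and smoothness follow from hypoellipticity of $\partial_t - \Delta$ applied to the Cauchy problem.

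First I would pin down the short-time behavior and the smoothness near the diagonal through the Minakshisundaram--Pleijel parametrix. Working in geodesic normal coordinates supplied by the exponential map, one posits the ansatz
\begin{align}
    H_N(t,p,q) = \frac{1}{(4\pi t)^{n/2}} \exp\!\left(- \frac{d(p,q)^2}{4t}\right) \sum_{j=0}^{N} u_j(p,q)\, t^j,
\end{align}
and determines the coefficients $u_j$ recursively from the transport equations obtained by forcing $(\partial_t - \Delta) H_N$ to vanish to high order in $t$; the leading coefficient is the inverse square root of the Van Vleck--Morette determinant, normalized so that $u_0(p,p)=1$. A direct estimate gives a residual $(\partial_t - \Delta) H_N = O(t^{\,N-n/2})$ uniformly near the diagonal. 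One then upgrades $H_N$ to an exact solution by Duhamel's principle: the correction is the convolution of $H_N$ with the Levi (Volterra) series built from the residual, whose convergence for $N$ large yields a genuine smooth solution $K_t(p,q)$ of the heat equation sharing the leading Gaussian singularity of $H_N$. Property (2), that $\lim_{t\to 0} K_t(p,\cdot) = \delta_p$, then follows because the normalization $u_0(p,p)=1$ makes $K_t(p,\cdot)$ an approximate identity concentrating all its mass at $p$.

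The remaining properties I would deduce from the semigroup structure rather than the parametrix. Symmetry $K_t(p,q)=K_t(q,p)$, property (1), is immediate from self-adjointness of $P_t$, whose kernel must equal its transpose. Property (3) has two components: that $K_t$ solves the heat equation is automatic from $\partial_t P_t = \overline{\Delta}\, P_t$ together with hypoelliptic smoothing, while positive-definiteness of the kernel is just operator positivity of $P_t = e^{t\overline{\Delta}}$, whose spectrum lies in $(0,1]$ since $\overline{\Delta}\le 0$ (pointwise positivity $K_t>0$, if intended, follows instead from the parabolic minimum principle applied to an exhaustion of $M$ by relatively compact domains). Finally, the Chapman--Kolmogorov identity (4) is the integral-kernel restatement of the semigroup law $P_{t} = P_{t-s}P_s$; I would prove it by observing that both $K_t$ and $(p,q)\mapsto\int_M K_{t-s}(p,p')K_s(p',q)\,dp'$ solve the same heat equation with the same initial datum $\delta_q$ as $t\downarrow s$, and invoking uniqueness of bounded solutions to the Cauchy problem on a complete manifold.

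The hardest part is the global control needed to turn these local, short-time constructions into genuinely global objects on a possibly noncompact $M$. Concretely, convergence of the Duhamel/Levi series and the uniqueness argument underpinning property (4) must be secured without the compactness that would make everything automatic; here one leans on completeness both to obtain essential self-adjointness (so that $P_t$ is unambiguously defined and the minimal heat kernel coincides with the kernel of $e^{t\overline{\Delta}}$) and to run the maximum-principle and uniqueness arguments via an exhaustion of $M$. I would emphasize that conservativeness $\int_M K_t(p,q)\,dq = 1$ may genuinely fail on a complete manifold and is not among the claimed properties, so stochastic completeness is not required; the delicate point is purely the passage from local estimates to the global kernel together with its semigroup identity. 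The full technical execution is carried out in \cite{Berline2004}.
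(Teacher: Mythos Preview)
The paper does not prove this theorem at all: it is stated as a cited result from \cite{Berline2004} and used only as a black box (specifically property (3), to certify that the heat kernel is a legitimate positive-definite kernel for the regression argument). So there is no ``paper's own proof'' to compare against.

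Your sketch is a faithful outline of the standard construction one would find in that reference and in the broader literature: essential self-adjointness of $\Delta$ on a complete manifold via Gaffney, the Minakshisundaram--Pleijel parametrix for the short-time asymptotics, Duhamel/Levi iteration to pass from parametrix to kernel, and then reading off symmetry, positivity, and the semigroup identity from the spectral representation $P_t=e^{t\overline\Delta}$. Your caveats about noncompactness (no automatic stochastic completeness, uniqueness via exhaustion and the maximum principle) are well placed. For the purposes of this paper, though, none of that machinery is actually needed: since $\text{Sym}^+(n)$ with the log-Euclidean metric is globally isometric to the Euclidean space $\text{Sym}(n)$ via $\log$, the heat kernel is explicitly the pulled-back Euclidean Gaussian \eqref{eq:HeatKernelApp}, and all four properties can be checked by hand. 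That is in fact what the paper exploits in the proof of Theorem~\ref{Thm:comparison}.
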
   

We are only interested in the third property as its confirms that the heat kernel truly determines a feature map from $\text{Sym}^+(n)$ into a higher dimensional feature space \cite{ShaweTaylor2004}. Let $\boldsymbol{y} = (y_i)$ denote the column vector consisting of training data labels, let $\boldsymbol{K}_{\tilde{S}}$ denote the column vector consisting of $K_R(S_i, \tilde{S})$ and let $G = (K_R(S_i,S_j)$ denote the kernel matrix evaluated by $K_R$ on the data set. Then the predictor function regressed through \emph{kernel ridge regression} is 
\begin{align}\label{eq:KernelRegression}
	\tilde{m}(\tilde{S}) = \boldsymbol{y}^T G^{-1} \boldsymbol{K}_{\tilde{S}}.
\end{align} 
As a reminder, if $\det G = 0$ (when does not happen here since the kernel function is positive definite), a regularization $\zeta \geq 0$ can be chosen as a trade-off between weights and square errors when optimizing the regression. The log-Euclidean metric is now explicitly used in evaluating $\boldsymbol{K}_{\tilde{S}}$ as well as $G$. On the other hand, a vanilla geodesic regression could be intuitively treat as the multi-linear regression on manifold with the Riemannian metric substituting for the Euclidean metric, which is fairly easy to deal with when we have a coordinate system and this is what we are going to do in the following proof.  

\begin{theorem}
	For $\text{Sym}^+(n)$ with log-Euclidean metric and estimators $\tilde{m}$ obtained with either geodesic regression or Gaussian kernel regression on samples. Augmented data from Riemannian geodesics bear the mean square error no more than those from straight lines.  
\end{theorem}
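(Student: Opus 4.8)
The whole statement should be reduced to a Euclidean fact through the \emph{global isometry} $V:=\log S$ between $(\text{Sym}^+(n),g_{\text{log-Euc}})$ and the flat vector space $(\text{Sym}(n),\langle\cdot,\cdot\rangle)$. Under this identification the manifold is flat, Riemannian geodesics are the $\exp$-images of affine segments, the Riemannian distance is $d(S_i,S_j)=\|\log S_i-\log S_j\|$ as recorded in Appendix \ref{sec:swell}, and the Laplace--Beltrami operator \eqref{eq:LBOperator} degenerates to the ordinary Laplacian in the coordinates $V$; hence, by uniqueness of the fundamental solution in Theorem \ref{Thm:HeatKernel}, the heat kernel $K_R$ is exactly the Euclidean Gauss kernel $K_E$ evaluated at $(\log S_i,\log\hat S)$. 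First I would make this precise, so that geodesic regression on $\text{Sym}^+(n)$ becomes ordinary multilinear least squares in the variable $V$, and heat-kernel ridge regression \eqref{eq:KernelRegression} becomes ordinary Gaussian-kernel ridge regression in $V$.

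Second, I would rewrite the two augmentation rules in these coordinates: the Riemannian-geodesic sample \eqref{eq:geodesic} becomes the honest chord $\widetilde V:=(1-\lambda)\log S_i+\lambda\log S_j$, whereas the straight-line sample \eqref{eq:line} becomes $\widetilde V':=\log\big((1-\lambda)S_i+\lambda S_j\big)$. Operator concavity of the matrix logarithm gives $\widetilde V'\succeq\widetilde V$ in the Loewner order, i.e.\ $\Delta:=\widetilde V'-\widetilde V$ is positive semidefinite and vanishes exactly when $S_i$ and $S_j$ commute. This $\Delta$ is the only reason the two augmented inputs differ, and it is the algebraic source of the swelling effect proved in Appendix \ref{sec:swell}.

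Third, for a geodesically affine estimator $\tilde m(S)=\langle B,\log S\rangle+b$ produced by geodesic regression, linearity in $V$ together with the linear label rule $\tilde y=(1-\lambda)y_i+\lambda y_j$ yields the exact identity
\begin{align*}
\tilde m(\tilde S)-\tilde y=(1-\lambda)\big(\tilde m(S_i)-y_i\big)+\lambda\big(\tilde m(S_j)-y_j\big),
\end{align*}
so the residual at a Riemannian-augmented point is a $\lambda$-convex combination of the two training residuals; convexity of $t\mapsto t^2$ and summation over the sampled pairs then bound the augmented square loss by the empirical square loss of $\tilde m$ on the \emph{original} data, which the manifold regression has already driven down. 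Substituting $\log\tilde S'=\widetilde V+\Delta$ into the same estimator instead gives $\tilde m(\tilde S')-\tilde y=\big(\tilde m(\tilde S)-\tilde y\big)+\langle B,\Delta\rangle$, where the extra term is a systematic bias caused purely by the noncommutativity $\Delta\neq 0$ and cannot be absorbed by a manifold-consistent, $V$-affine estimator; collecting the cross terms over the sampled pairs and using the normal equations of the least-squares fit (residuals orthogonal to the constant and to the design) — or, equivalently, averaging over the augmentation randomness — leaves the nonnegative remainder $\sum\langle B,\Delta\rangle^2\ge 0$, which is the asserted inequality \eqref{eq:MSE}. For the heat-kernel estimator I would run the identical comparison through the reproducing property and positive-definiteness of $K_R$ in Theorem \ref{Thm:HeatKernel}(3), using the universal-approximation property of the Gauss kernel to realize the affine argument above inside the relevant RKHS.

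The hard part is the straight-line half of the third step: for a single augmented point the sign of the cross term $\langle B,\Delta\rangle\big[(1-\lambda)(\tilde m(S_i)-y_i)+\lambda(\tilde m(S_j)-y_j)\big]$ is not controlled, so the inequality is genuinely an aggregate statement and must be closed either through the least-squares orthogonality relations after summing over all generated pairs, or by restricting attention to estimators that already respect the manifold (so that $\Delta$ is a pure, uncancellable bias, exactly as the Gaussian-integral/Hölder computation in Appendix \ref{sec:swell} is uncancellable). A secondary obstacle is the kernel case, where $\tilde m$ is not affine in $V$: one must replace the exact chord identity by the approximate one supplied by universality of $K_R$, and also verify that the heat-kernel normalization matches the dimension of $\text{Sym}(n)$ under the isometry.
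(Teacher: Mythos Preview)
Your reduction to $V=\log S$ coordinates, the identification of the heat kernel with the Euclidean Gaussian in those coordinates, and the use of operator concavity of $\log$ to produce the Loewner gap $\Delta=\widetilde V'-\widetilde V\succeq 0$ all match the paper exactly. The divergence is in how the final inequality is closed in each of the two regression regimes.

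\textbf{Geodesic regression.} Your proposed mechanism---orthogonality of residuals to the design, so that the cross term $\sum\langle B,\Delta\rangle\big[(1-\lambda)r_i+\lambda r_j\big]$ vanishes---does not work as stated: the normal equations make the residual vector orthogonal to the span of $\{\log S_k\}$ and the constant, but $\Delta=\log((1-\lambda)S_i+\lambda S_j)-(1-\lambda)\log S_i-\lambda\log S_j$ is a nonlinear function of the design and has no reason to lie in that span. You flag this honestly as ``the hard part,'' but the closure you propose is not available. The paper's device is different and much cheaper: it observes that the sample count $N$ is smaller than the ambient dimension $\tfrac12 n(n+1)$, so the least-squares hyperplane interpolates the training data exactly and all $r_k=0$. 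Then the Riemannian-augmented residual is identically zero (the chord $\widetilde V$ lies on the hyperplane), while the straight-line residual is $\langle B,\Delta\rangle$, and the inequality is just $0\le\langle B,\Delta\rangle^2$. Your convex-combination identity is correct but not needed once the residuals vanish.

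\textbf{Kernel regression.} The paper does \emph{not} invoke universality or an RKHS approximation of the affine argument. Instead it restricts to the pair $(S_i,S_j)$ generating the augmented point (justified informally via a penalty), writes the $2\times 2$ kernel predictor $\tilde m(S)=\boldsymbol{y}^T G^{-1}\boldsymbol{K}_S$ explicitly, and shows two things: (i) along the geodesic, $K_{i,\tilde S}=K_{ij}^{\lambda}$ and $K_{j,\tilde S}=K_{ij}^{1-\lambda}$, and a second-derivative check in $\lambda$ gives $\tilde m(\tilde S)\le \tilde y$; (ii) the Loewner inequality $\log((1-\lambda)S_i+\lambda S_j)\succeq(1-\lambda)\log S_i+\lambda\log S_j$ forces $K_{i,\tilde S'}\le K_{i,\tilde S}$ and $K_{j,\tilde S'}\le K_{j,\tilde S}$, hence $\tilde m(\tilde S')\le\tilde m(\tilde S)$. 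Combined with (i) and the nonnegativity of labels this yields the squared-error ordering. Your universality route is a different strategy and, as you note, would require extra work to make the affine comparison rigorous inside the RKHS; the paper's explicit two-point computation avoids that entirely.
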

\begin{proof}
	We first note the fact that the exponential and logarithm functions
	\begin{align}
		\exp: \text{Sym}(n) \rightarrow \text{Sym}^+(n), \quad \log: \text{Sym}^+(n) \rightarrow \text{Sym}(n)
	\end{align}
	are \emph{isometries} between $\text{Sym}^+(n)$ and $\text{Sym}(n)$. That is: $(a)$ they are bijective and $(b)$ preserve the Riemannian distance functions:
	\begin{align}\label{eq:isometry}
		\Vert H_i - H_j \Vert = d(\exp H_i, \exp H_j), \quad d(S_i, S_i) = \Vert \log S_i - \log S_j \Vert
	\end{align} 
	for any $S_i, S_j \in \text{Sym}^+(n)$ and any $H_i, H_j \in \text{Sym}(n)$. A detailed proof on the RHS equation from can be found in \cite{Gallier2020} and with the bijectivity of $\exp$ and $\log$, we obtain the LHS equation from above. Besides, being defined as collection of all symmetric $n \times n$ matrices, $\text{Sym}(n)$ is an $\frac{1}{2}n(n-1)$-dimensional Euclidean space with a standard coordinate system introduced above. Combining with the logarithm function $\log: \text{Sym}^+(n) \rightarrow \text{Sym}(n)$, then we obtain a coordinate system for $\text{Sym}^+(n)$ within which we can express the heat equation Eq.\eqref{eq:HeatEquationApp} explicitly. 
	
	Since $\log$ is an isometry and since $\text{Sym}(n)$ is a Euclidean space, as a basic result in Riemannian geometry, the Christoffel symbols $\Gamma^k_{ij}$ in Eq.\eqref{eq:LBOperator} vanishes \cite{Lee2018} in the defined coordinate system and hence the Laplace–Beltrami operator degenerates to the common Laplacian. As a result, the fundamental solution is exactly Eq.\eqref{eq:EHeatKernel} expressed by the coordinates. After taking the inverse map $\exp$, the Euclidean distance is replaced by Riemannian distance in Eq.\eqref{eq:EHeatKernel}. Hence,
	\begin{align}\label{eq:HeatKernelApp}
		K_R(S_i, \hat{S}) = \frac{1}{(2\pi \sigma^2)^{\frac{n(n-1)}{4}}} \exp\left(- \frac{1}{2\sigma^2} d(S_i, \hat{S})^2\right),
	\end{align}
	is the heat kernel on $\text{Sym}^+(n)$ with the property being positive definite by Theorem \ref{Thm:HeatKernel}.  	
	
	Recall that the Riemannian geodesic of log-Euclidean metric is
	\begin{align}\label{eq:GeodesicApp}
		\gamma(\lambda) = \tilde{S} = \exp \left((1-\lambda) \log S_i + \lambda \log S_j \right).
	\end{align} 
	Its coordinate representations is then
	\begin{align}\label{eq:CoordinateRep}
		\log (\gamma(\lambda)) = (1-\lambda) \log S_i + \lambda \log S_j,
	\end{align}
	which is a straight line connecting $\log S_i$ and $\log S_j \in \text{Sym}(n)$. As a contrary, the coordinate representation of 
	\begin{align}\label{eq:lineApp}
		\eta(\lambda) = \tilde{S}' = (1-\lambda) S_i + \lambda S_j
	\end{align}
	is highly curved as 
	\begin{align}\label{eq:CoordinateCurve}
		\log (\eta(\lambda)) = \left( \log (1-\lambda) S_i + \lambda S_j \right).
	\end{align}
    Since $\text{Sym}(n)$ is an Euclidean space and since we verified above that the function $\log$ is an isometry, conducting geodesic regression for the samples $\{(S_i, y_i) \vert i=1,...,N\}$ is merely solving the linear model of $\{(\log S_i, y_i) \vert i=1,...,N\}$. Since the total sample number $N$ in our case is less than the dimension of the ambient Euclidean space $n^2$, the optimal solution is just a hyperplane encompassing all samples as well as those synthesized via Eq.\eqref{eq:CoordinateRep}. However, curves like Eq.\eqref{eq:CoordinateCurve} are manifestly deviated from the regression hyperplane which leads to large square loss.
	
	To verify the case involving Gaussian kernel, we make use of the following operator inequalities \cite{Carlen2009}: 
	\begin{align}\label{eq:Inequality}
		\log(  (1-\lambda) S_i + \lambda S_j ) \geq (1-\lambda) \log S_i + \lambda \log S_j.
	\end{align}
    Intuitively, the logarithm is a concave function on $(0,+\infty)$, which is generalized to hold in the setting of positive semidefinite matrices with $A \geq B$ meaning $A - B$ is positive semidefinite. For simplicity, we only analyze Eq.\eqref{eq:KernelRegression} for a pair of samples $S_i,S_j$ as an augmented sample $\tilde{S}$ is coined in this way through our mixup method. In statistics, penalty functions \cite{Yeniay2005} can be employed weaken the influence of other samples and achieve this effect. With these preparation,
    \begin{align}\label{eq:Prediction}
    	\tilde{m}(S) & = \boldsymbol{y}^T G^{-1} \boldsymbol{K}_{S} = (y_i, y_j) \begin{pmatrix} K_R(S_i,S_i) & K_R(S_i,S_j) \\ K_R(S_j,S_i) & K_R(S_i,S_i) \end{pmatrix}^{-1} \begin{pmatrix} K_R(S_i,S) \\ K_R(S_j,S) \end{pmatrix} \notag \\
    	= & \frac{1}{1 - K_{ij}^2} \Big( \big(y_i - K_{ij} y_j \big) K_{i,S} + \big(y_j - K_{ij} y_i) K_{j,S} \big) \Big),
    \end{align}
    where $K_{ij} = \exp\left(- \frac{1}{2\sigma^2} d(S_i, \hat{S})^2\right)$ is an abbreviation for the non-normalized Gaussian distribution of log-Euclidean distance with $K_{i,S}$ being denoted analogously. Substituting $\tilde{S}$ and $\tilde{S}'$ from Eq.\eqref{eq:GeodesicApp} and Eq.\eqref{eq:lineApp} into the above equation, we then compare the estimators with $\tilde{y} = (1-\lambda)y_i + \lambda y_j$ directly. 
    
    For predictions of $\tilde{S}$, we note that
    \begin{align}
    	& K_{ij} = \exp\left(- \frac{1}{2\sigma^2} \Vert \log S_i - \log S_j \Vert \right), \\
    	& K_{i,\tilde{S}} = \exp\left(- \frac{1}{2\sigma^2} \lambda \Vert \log S_i - \log S_j \Vert \right) = K_{ij}^\lambda \\
    	& K_{j,\tilde{S}} = \exp\left(- \frac{1}{2\sigma^2} (1-\lambda) \Vert \log S_i - \log S_j \Vert \right) = K_{ij}^{1 - \lambda}
    \end{align}
    with
    \begin{align}
    	\tilde{m}(\tilde{S}) = \frac{1}{1 - K_{ij}^2} \Big( K_{ij}^\lambda \big(y_i - K_{ij} y_j \big)  + K_{ij}^{1 - \lambda} \big( y_j - K_{ij} y_i) \big) \Big)
    \end{align}
    being a \emph{concave function} for $\lambda \in [0,1]$. This can be demonstrated by examining that the second order derivative 
    \begin{align}
    	\frac{d^2\tilde{m}(\tilde{S}(\lambda))}{d\lambda^2} = \frac{\ln^2 K_{ij} }{K_{ij}^\lambda (1 - K_{ij}^2) } \Big( ( K_{ij} - K_{ij}^{2\lambda + 1}) y_j + (  K_{ij}^{2\lambda} - K_{ij}^2) y_i \Big), 
    \end{align}
    which is nonnegative because $K_{ij} \leq 1$ and $K_{ij} - K_{ij}^{2\lambda+1}, K_{ij}^{2\lambda} - K_{ij}^2 \geq 0$. As a result, $\tilde{m}(\tilde{S}) \leq \tilde{y}$. On the other hand, 
    \begin{align}
    	& K_{i,\tilde{S}'} = \exp\left(- \frac{1}{2\sigma^2} \Vert \log ( (1- \lambda) S_i - \lambda S_j ) - \log S_i \Vert \right) \\
    	& K_{j,\tilde{S}'} = \exp\left(- \frac{1}{2\sigma^2} \Vert \log ( (1- \lambda) S_i - \lambda S_j ) - \log S_j \Vert \right) 
    \end{align}
    are intricate as the linear combination of matrices $( (1- \lambda) S_i - \lambda S_j )$ does not commute with the logarithm. Despite of this difficulty, we are still above to compare $\tilde{m}(\tilde{S})$ and $\tilde{m}(\tilde{S}')$ based on their general expansion in Eq.\eqref{eq:Prediction}. By \eqref{eq:Inequality},
    \begin{align}
    	\begin{aligned}
    		& \log(  (1-\lambda) S_i + \lambda S_j ) -\log S_i \geq \lambda (\log S_i + \log S_j) \\
    		\implies & \Vert \log(  (1-\lambda) S_i + \lambda S_j ) -\log S_i \Vert \geq \Vert \lambda (\log S_i + \log S_j) \Vert \\
    		\implies & K_{i,\tilde{S}'}  = \exp\left(- \frac{1}{2\sigma^2} \Vert \log ( (1- \lambda) S_i - \lambda S_j ) - \log S_i \Vert \right) \\
    		& \leq  \exp\left(- \frac{1}{2\sigma^2} \lambda \Vert \log S_i - \log S_j \Vert \right) = K_{i,\tilde{S}}. 
    	\end{aligned}
    \end{align}
    The second inequality is due to the fact that the operator norm $\Vert \text{ - } \Vert$ equals the largest eigenvalue of any positive semidefinite operator. Similar argument also implies case with $S_j$. Together with the concavity of $\tilde{m}(\tilde{S})$, Eq.\eqref{eq:Prediction} and the range of our labels, we conclude that
    \begin{align}
    	0 \leq \tilde{m}(\tilde{S}') \leq \tilde{m}(\tilde{S}) \leq \tilde{y} \implies \sum (\tilde{m}(\tilde{S}) - \tilde{y})^2 \leq (\tilde{m}(\tilde{S}') - \tilde{y})^2,
    \end{align}
    which are finally summed over the samples to show that the square error of estimation using geodesics is no more than that using straight lines on $\text{Sym}^+(n)$.
\end{proof}

\begin{remark}
	For affine-invariant metric, it has been shown that the induced \emph{Riemannian curvature tensor} $R$ is nonzero \cite{Sakai1996,Thanwerdas2023} and hence it is impossible to find coordinate systems within which $\Gamma^k_{ij} = 0$ \cite{Lee2018}. Therefore, the fundamental solution to the heat equation can never take in the concise form as Eq.\eqref{eq:HeatKernelApp} and Theorem \ref{Thm:comparison} becomes invalid to appraise the case when using affine-invariant metric.  
\end{remark}


\section{Running Time on three smaller datasets}

As shown in Figure \ref{fig:other_running}, on the smaller datasets, PNC, ABIDE, and TCGA-Cancer, there is no significant difference in elapsed time between the different methods. Notably, the proposed \method is magically faster than C-Mixup on the TCGA-Cancer dataset. This is mainly due to the small node size of TCGA-Cancer, which reduces the main barrier of the eigenvalue decomposition in \method, while the time cost of the sampling operation in the C-Mixup baseline does not change dynamically with the node size.

\label{appendix:runningtime}
\begin{figure}
	\centering
	\includegraphics[width=0.9\linewidth]{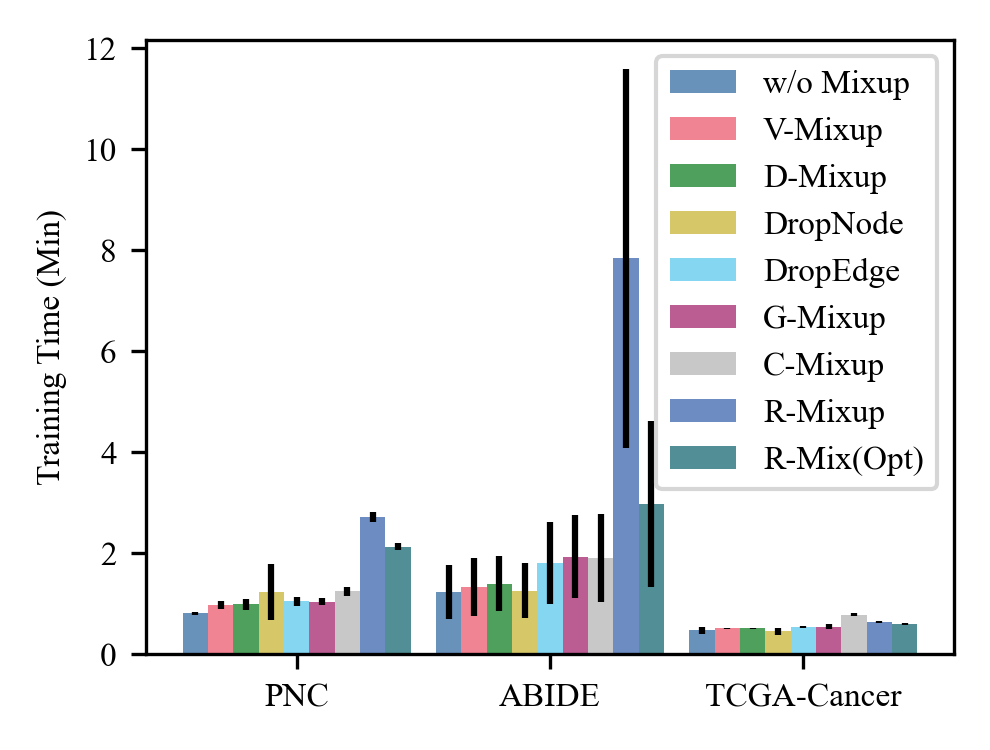}
	\caption{Running Time in PNC, ABIDE and TCGA-Cancer. R-Mixup is the original version of our method while R-Mix(Opt) is the proposed optimized version in Section \ref{sec:method_rt}.}
	\label{fig:other_running}
\end{figure}

\section{Code Implementation}
\label{appendix:code}

\begin{lstlisting}[language=Python, caption=Python Example]
import torch
import numpy as np

def tensor_log(t):
    # condition: t is symmetric.
    s, u = torch.linalg.eigh(t)
    s[s <= 0] = 1e-8
    return u @ torch.diag_embed(torch.log(s)) @ u.permute(0, 2, 1)

def tensor_exp(t):
    # condition: t is symmetric.
    s, u = torch.linalg.eigh(t)
    return u @ torch.diag_embed(torch.exp(s)) @ u.permute(0, 2, 1)
    
def r_mixup(x, y, alpha=1.0, device='cuda'):
    if alpha > 0:
        lam = np.random.beta(alpha, alpha)
    else:
        lam = 1
    batch_size = y.size()[0]
    index = torch.randperm(batch_size).to(device)
    x = tensor_log(x)
    x = lam * x + (1 - lam) * x[index, :]
    y = lam * y + (1 - lam) * y[index]
    return tensor_exp(x), y
\end{lstlisting}

\section{GCN Backbone Performance}
The performance of models with the GCN backbones can be found in Table \ref{tab:gcn}. 
\begin{table*}[htbp]
	\centering
	\small
	\caption{Overall performance comparison based on the GCN backbone. The best results are in bold, and the second best results are \underline{underlined}. The $\uparrow$ indicates a higher metric value is better and $\downarrow$ indicates a lower one is better.}
 \vspace{-1ex}
	\label{tab:gcn}
	\resizebox{1.0\linewidth}{!}{
		\begin{tabular}{ccccc cccc ccccc}
			\toprule
			\multirow{2.5}{*}{Method} &\multicolumn{2}{c}{\bf ABCD-BioGender}& & \multicolumn{1}{c}{\bf ABCD-Cog} & & \multicolumn{2}{c}{\bf PNC}& & \multicolumn{2}{c}{\bf ABIDE} & & \multicolumn{2}{c}{\bf TCGA-Cancer}\\
			\cmidrule(lr){2-3} \cmidrule(lr){5-5} \cmidrule(lr){7-8} \cmidrule(lr){10-11} \cmidrule(lr){13-14}
			& {AUROC$\uparrow$} & {Accuracy$\uparrow$}& { } & {MSE$\downarrow$}& { }  & {AUROC$\uparrow$}& {Accuracy$\uparrow$}& { } & {AUROC}$\uparrow$& {Accuracy$\uparrow$} & { }& {Precision$\uparrow$}& {Recall$\uparrow$}\\
			\midrule
			w/o Mixup & 78.82±0.62 & 71.55±0.43 & { } & 80.85±4.69 & { } & 59.14±5.66 & 60.00±4.72 & { } & 55.09±6.91 & 55.20±6.18 & { } & 30.49±6.89 & 40.83±6.18   \\
			\midrule
			V-Mixup & 81.47±0.79 & \underline{73.97±0.79} & { } & 80.35±3.09 & { } & 63.63±3.80 & \underline{61.76±3.25} & { } & 58.49±6.58 & 56.40±3.91 & { } & 38.50±9.22 & 46.67±11.18   \\
			D-Mixup & 81.30±0.35 & 73.67±0.39 & { } & 80.90±9.48 & { } & 58.68±6.24 & 58.43±5.99 & { } & 60.19±6.58 & 55.40±6.15 & { } & 37.67±5.47 & \textbf{50.00±4.17}  \\
			DropNode & 80.77±2.02 & 73.18±2.09 & { } & 88.11±10.59 & { } & \underline{63.65±5.04} & 61.57±5.16 & { } & 59.49±4.99 & 56.60±5.55 & { } & 29.58±7.69 & 39.17±10.46 \\
			DropEdge & 79.98±1.54 & 72.23±1.37 & { } & 85.98±2.31 & { } & 56.61±2.72 & 56.67±2.89 & { } & 56.58±6.78 & 54.80±4.76 & { } & \underline{39.44±7.72} & \textbf{50.00±7.80}  \\
			G-Mixup & 81.30±1.07 & 73.90±0.86 & { } & 81.28±3.46 & { } & 57.25±3.75 & 57.45±2.91 & { } & \underline{62.43±2.94} & \textbf{60.40±3.44} & { } & 38.64±8.47 & \underline{49.17±9.03}   \\
			C-Mixup & \underline{81.62±1.65} & 73.62±1.80 & { } & \underline{78.86±3.51} & { } & 60.88±7.24 & 58.24±7.61 & { } & 60.22±9.32 & 57.40±5.32 & { } & 34.17±11.74 & 46.67±15.14  \\
			\midrule
		\rowcolor{gray!10}	R-Mixup &\textbf{82.85±1.86} & \textbf{75.86±1.88} & { } & \textbf{74.88±2.03} & { } &  \textbf{64.39±5.05} &  \textbf{62.31±3.32} & { } & \textbf{63.03±5.58} & \underline{59.67±5.96} & { } & \textbf{44.78±8.64} & 48.44±8.61 \\
			\bottomrule
		\end{tabular}
	}
\end{table*}

\section{Case Study About Arbitrarily Incorrect Label Problem} 
\label{appendix:case_study}
To verify how R-Mixup migrate the \textit{arbitrarily incorrect label} problem, we design the following process:  
\begin{algorithm}
\caption{The Measurement of Arbitrarily Incorrect Label}\label{alg:cap}
\begin{algorithmic}
\State $i \gets n$
\State $d_v \gets 0$
\State $d_r \gets 0$
\While{$i > 0$}
 \State $(X_1, y_1), (X_2, y_2), (X_3, y_3) \sim \mathcal{D}_{ABCD-Cog}, \quad \text{where}  \quad y_1 < y_2 < y_3$ \Comment{Randomly sample 3 data points and sorted by $y$}
 \State $w = \frac{y_2 - y_3}{y_1-y_3}$ \Comment{Ensure $w y_1 + (1-w)y_3 = y_2$}
 \State $X_{vmix} = w X_1 + (1-w) X_3$ 
 \State $X_{rmix}= \exp \left( w \log X_1 + (1-w) \log X_3 \right)$
 \State $d_v += || X_{vmix} - X_2 ||_1$
 \State $d_r += || X_{rmix} - X_2 ||_1$
\EndWhile
\State $\overline{d_v} = \frac{d_v}{n}$
\State $\overline{d_r} = \frac{d_r}{n}$
\end{algorithmic}
\end{algorithm}

We set $n$ as 1000 and obtain $\overline{d_v}=24,416.04\text{±}4,066.60$, $\overline{d_r}=22,622.41\text{±}3,873.05$, where the sample distance $\overline{d_r}$ from R-Mixup is significantly smaller (7.3\%) than the sample distance $\overline{d_v}$ from V-Mixup. The phenomenon shows our R-Mixup indeed can migrate the \textit{arbitrarily incorrect label} problem.

\section{Dataset Acknowledgments}
\label{sec:dataset}
Part of data used in the preparation of this article were obtained from the Philadelphia Neurodevelopmental Cohort (PNC) study\footnote{\url{https://www.med.upenn.edu/bbl/philadelphianeurodevelopmentalcohort.html}} and the Adolescent Brain Cognitive Development (ABCD) Study\footnote{\url{https://abcdstudy.org}}. The PNC study support for the collection of the datasets was provided by grant RC2MH089983 awarded to Raquel Gur and RC2MH089924 awarded to Hakon Hakorson. All subjects were recruited through the Center for Applied Genomics at The Children’s Hospital in Philadelphia.  held in the NIMH Data Archive (NDA). This is a multisite, longitudinal study designed to recruit more than 10,000 children age 9-10 and follow them over 10 years into early adulthood. The ABCD Study\textsuperscript{\textregistered} is supported by the National Institutes of Health and additional federal partners under award numbers U01DA041048, U01DA050989, U01DA051016, U01DA041022, U01DA051018, U01DA051037, U01DA050987, U01DA041174, U01DA041106, U01DA041117, U01DA041028, U01DA041134, U01DA050988, U01DA051039, U01DA041156, U01DA041025, U01DA041120, U01DA051038, U01DA041148, U01DA041093, U01DA041089, U24DA041123, U24DA041147. A full list of supporters is available at  \sloppy\url{https://abcdstudy.org/federal-partners.html}. A listing of participating sites and a complete listing of the study investigators can be found at  \url{https://abcdstudy.org/consortium_members/}. ABCD consortium investigators designed and implemented the study and/or provided data but did not necessarily participate in the analysis or writing of this report. This manuscript reflects the views of the authors and may not reflect the opinions or views of the NIH or ABCD consortium investigators. The ABCD data repository grows and changes over time. The ABCD data used in this report came from NIMH Data Archive Release 4.0 (DOI 10.15154/1523041). DOIs can be found at \url{https://nda.nih.gov/abcd}.

\end{document}